\theoremstyle{plain}
\newtheorem{theorem}{Theorem}
\newtheorem{corollary}{Corollary}
\newtheorem{remark}{Remark}
\newcommand\BibTeX{{\rmfamily B\kern-.05em \textsc{i\kern-.025em b}\kern-.08em
T\kern-.1667em\lower.7ex\hbox{E}\kern-.125emX}}
\begin{document}

\runninghead{Sun, Ding, and Zhu}

\title{A Geometric Method for Base Parameter Analysis in Robot Inertia Identification Based on Projective Geometric Algebra}

\author{Guangzhen Sun\affilnum{1}, Ye Ding\affilnum{1}, and Xiangyang Zhu\affilnum{1}}

\affiliation{\affilnum{1}The State Key Laboratory of Mechanical System and
	Vibration, School of Mechanical Engineering, Shanghai Jiao Tong University,
	Shanghai 200240, China}

\corrauth{Ye Ding, The State Key Laboratory of Mechanical System and
	Vibration, School of Mechanical Engineering, Shanghai Jiao Tong University,
	Shanghai 200240, China. \email{y.ding@sjtu.edu.cn}}

\begin{abstract}
This paper proposes a novel geometric method for analytically determining the base inertial parameters of robotic systems. The rigid body dynamics is reformulated using projective geometric algebra, leading to a new identification model named ``tetrahedral-point (TP)" model. Based on the rigid body TP model, coefficients in the regresoor matrix of the identification model are derived in closed-form, exhibiting clear geometric interpretations. Building directly from the dynamic model, three foundational principles for base parameter analysis are proposed: the shared points principle, fixed points principle, and planar rotations principle. With these principles, algorithms are developed to automatically determine all the base parameters. The core algorithm, referred to as Dynamics Regressor Nullspace Generator (DRNG), achieves $O(1)$-complexity theoretically following an $O(N)$-complexity preprocessing stage, where $N$ is the number of rigid bodies. The proposed method and algorithms are validated across four robots: Puma560, Unitree Go2, a 2RRU-1RRS parallel kinematics mechanism (PKM), and a 2PRS-1PSR PKM. In all cases, the algorithms successfully identify the complete set of base parameters. Notably, the approach demonstrates high robustness and computational efficiency, particularly in the cases of PKMs. Through the comprehensive demonstrations, the method is shown to be general, robust, and efficient.
\end{abstract}

\keywords{Base parameters, dynamics identification, projective geometric algebra, robotics}

\maketitle

\section{Introduction}
Accurate dynamic modeling is essential for advanced motion planning and control in robotics, with inertial parameters serving as a fundamental component (\cite{ZHANG2025113208}, \cite{Lee2024ModernReview}, \cite{Huang2023TRO}, \cite{Han2020TRO}). These parameters, comprising mass, center of mass, and the rotational inertia tensor of each rigid body, characterize the physical behavior of the robotic system. Since the 1980s, a considerable amount of research has been centered around their identification (\cite{Lee2020Convex}, \cite{Yoshida2000PositiveDefinite}, \cite{Atkeson1986Estimation}). Although recent machine learning approaches enable implicit estimation of these parameters, explicit identification remains critical for reducing the sim-to-real gap and ensuring reliable performance in physical deployments.

A key observation in inertial parameter identification is that the robot's actuation is linear to its inertial parameters (\cite{KhalilModeling2002}). This property allows the dynamic model to be reformulated as group of linear equations, making the identification problem a linear regression problem. The group of linear equations is referred to as the identification model of the robot. In this formulation, the measured actuation signals constitute the target vector, the inertial parameters form the coefficient vector, and the regressor matrix is determined by the robot's motion. However, due to holonomic constraints imposed by the robot's joints, the regressor matrix is typically rank-deficient. Not all inertial parameters can be independently identified. The minimal subset of parameters that can be independently identified under sufficient excitation is known as the base parameters. Identifying this subset and understanding its structure is referred to as base parameter analysis in inertial parameter identification (\cite{Wensing2024Geometric}). It contributes to reduce the computational cost and enhance the robustness of identification process (\cite{Khalil1987MinimumOperationsTree}).

A variety of methods have been proposed to address the base parameter analysis problem, which are commonly classified into numerical, symbolic, and geometric methods.Numerical methods were among the earliest ones developed and applied in the literature because of its generality and no need for analytical analysis (\cite{Atkeson1986Estimation}). These methods operate on a finite set of randomly sampled data, which must be sufficiently exciting to ensure that the linear dependency in the regressor matrix is solely caused by the robot’s geometric structure. Using QR decomposition or singular value decomposition in matrix theory, a group of base parameters can be obtained numerically (\cite{Sheu1991IndependentSpace}, \cite{Gautier1991NumericalCal}, \cite{Gautier1990NumericalCalBase}). Because the numerical methods are inherently suitable to general linear regression problem, they can be applied to any type of robot. Especially, numerical methods are widely applied in the cases of parallel robots, also known as parallel kinematics mechanisms (PKMs), which normally have multiple closed loops and complex geometric structure (\cite{Danaei2017SphericalPKM}, \cite{Farhat2008PKM_RPS}, \cite{Goldenberg1992IdentClosedKinematicsChain}). However, despite its generality, numerical methods can not provide insight to the dynamics properties of the robot. Besides, the base parameter analysis can be unreliable in the presence of insufficiently exciting data and numerical issues.

Symbolic methods utilize the equations of motion (EoM) to identify unidentifiable parameters and regroup the linear dependent ones analytically (\cite{Khalil2014OpenSYMORO}, \cite{Mayeda1990BaseParamModel}). Early symbolic methods focused on tree structure robots. \cite{Khalil1987MinimumOperationsTree} conducted detailed analysis of the dynamic equations, and provided symbolic regrouping formulas. \cite{Gautier1990DirectCalSerial} further utilized the energy model so that the symbolic formulation procedure was simplified. Due to the complexity of EoM, multiple special cases can cause extra linear dependencies. As a result, several techniques have been developed to improve the generality of symbolic method for tree structure robots (\cite{Kawasaki1991MinimumTree}, \cite{Khalil1994DirectCalComment}). Nevertheless, closed loops are common in practical robot design. \cite{Bennis1990Parallelogram} considered the kinematics constraints caused by parallelogram closed loops, and they further generalized to the closed-loop robots (\cite{Khalil1995SymbolicClosedLoop}), but only rotational and prismatic joints are considered. For the PKMs with universal and spherical joints, \cite{Khalil2004PKM_GS} conducted detailed analysis on the model of Gough-Stewart robots and provided symbolic base parameters. Additionally, methods in \cite{Kawasaki1996SymbolicClosedChain} and \cite{Klodmann2015BaseParaComplex} directly operated on the regressor matrix, also resulting in reliable symbolic solutions. However, symbolic methods demand considerable effort to account for diverse special cases. Moreover, the analysis becomes intricate when multi-degree-of-freedom joints and complex closed-loop structures are involved, making these methods seldom used in the context of PKMs.

As the development of geometric methods in robotics (\cite{Park2018} \cite{Lynch2017MR} \cite{Featherstone2007}), they are also applied in inertial parameter identification (\cite{Lee2021OptimalExcitation} \cite{fu_lie-theory-based_2021}) for a coordinates-free solution.  \cite{Anigstein2001BaisPartI} and \cite{Anigstein2001Basis-free} utilized the vectorial expression of points and avoid the complex coordinates analysis. \cite{Chen2002Planar} proposed the regrouping rules of inertial parameters for planar mechanisms in geometric perspective, and they further developed their methods for general spatial mechanisms \cite{Chen2002GeneralSpatial}. \cite{Ros2012PKM3RPS} and \cite{Ros2015InertiaTransfer} introduced the concept of multipoles into the base parameter analysis problem, and applied inertia transfer to determine the base inertial parameters. It is noted the base parameters of a 3-RPS PKM were analyzed symbolically, which is a breakthrough in the literature. Although some geometric properties of the inertia are considered in these methods and multiple propositions were summarized, these methods still resulted in symbolic operations and meticulous analysis was required in practical applications. Besides the manipulators, multiple research has also been conducted on the identification of floating-base systems (\cite{Bonnet2018HumanoidFixForceSensor} \cite{Wensing2018LMI} \cite{Venture2009Realtime}) with the development of humanoid robots. \cite{Ayusawa2014LeggedMSD} proposed a joint-torque-free identification method for floating-base systems, where it is found that all the base parameters can be identified via base-link dynamics. It significantly simplifies the identification procedure, and motivates multiple methods (\cite{Bonnet2018HumanoidFixForceSensor}). Recently, \cite{Wensing2024Geometric} made a breakthrough in the base parameter analysis problem. They proposed a provably correct geometric method to determine all the base parameters based on inertia transfer and the underlying Lie group structure. Besides, they developed an $O(N)$ algorithms suitable for not only fixed-base, but also floating-base systems. This method almost closed the base parameter analysis problem. Nevertheless, only the closed kinematic loops caused by joint motors was discussed. The cases of complex systems with multiple closed loops, such as PKMs, can not be solved through the algorithm.

Despite significant progress in base parameter analysis methods, a key challenge remains: the lack of a clear geometric interpretation of the regressor matrix. Although certain geometric properties have been identified and exploited—for instance, the fact that additional mass distributed along a joint's rotational axis does not influence its actuation (\cite{Wensing2024Geometric})—such insights are not directly shown in the dynamic equations. A fundamental geometric property of inertial parameters is the physical feasibility (\cite{Lee2021OptimalExcitation} \cite{Lee2020Convex} \cite{Sousa2014LMI}), which requires the spatial inertial matrix to be positive definite and the rotational inertia tensor to satisfy the triangle inequalities. This condition is equivalent to enforcing the positive definiteness of the pseudo-inertial matrix (\cite{Wensing2018LMI}). However, these geometric constraints are often omitted during the formulation of identification models, resulting in a linear model in vector space. To address these issues, it is necessary to develop new mathematical frameworks for robot dynamics identification that preserve physical feasibility while offering a clearer geometric interpretation of the regressor matrix.

In this paper, we propose the framework of projective geometric algebra (PGA) (\cite{GunnThesis} \cite{Dorst}) to address these problems and developed new geometric method for base parameter analysis in inertial parameter identification. Geometric algebra (GA) is also known as Clifford algebra (\cite{Bayro-Corrochano2021104326}). It introduces ``geometric product" into vector space, and enriches the algebraic structure of vector algebra. At the same time, it is capable of modeling lines, planes, rigid body motions all in vector forms, which is effective in solving geometric problems. Since the 2000s, it has been applied to solve physical problems and well know for its intuitiveness and conciseness (\cite{Hestenes2001OldWine} \cite{Hestenes2003spacetime}). In recent years, increasing research has pushed forward its application in robotics, including kinematics modeling (e.g., \cite{Li2016Mobility}) (e.g., \cite{bayro2007differential}), stiffness analysis (e.g., \cite{Li2019Stiffness}), calibration (e.g., \cite{Sui2024Calibration}), dynamics modeling (e.g., \cite{Hadfield} \cite{bayro2019robot}), motion planning (e.g., \cite{CHEN2022MotiongPlanning}), and optimal control (e.g., \cite{low_geometric_2023}). Among the different types of GA, the PGA proposed by Gunn (\cite{gunn2017geometric} \cite{gunn2017geometric}) is the minimal geometric algebra to model Euclidean space in a coordinate-free way. It utilizes a subalgebra  isomorphism to dual quaternions for rigid body motions, and points, lines and planes are represented with vectors. It has shown the power to solve physical problems in geometric way (e.g., \cite{Li2025GeometricExact} \cite{brehmer2023geometric}). However, current GA approaches to modeling rigid body inertia largely retain the classical matrix-based interpretation, offering little conceptual departure from traditional methods. Consequently, its use in base parameter analysis has received limited attention in existing research.

In our previous work (\cite{sun_analytical_2023} \cite{sun_high-order_2023}), we proposed the four-mass-particle model for rigid body dynamics with PGA based on the mass particle dynamic model in \cite{Dorst}. In this paper, we extend this result by introducing the tetrahedral-point model. Furthermore, we propose three laws of base parameter analysis for base parameters, enabling the analytical solution of base parameters. Besides, we proposed new algorithms to implement the proposed method. We summarize the contributions of this papers as 
\begin{enumerate}
	\item We propose the tetrahedral-point model for inertial parameters identification. The model is based on dynamics model instead of energy model, and it provides a clear geometric interpretation of the dynamics regressor matrix with a simple formula.
	\item We propose three geometric principles for base parameter analysis in inertial parameters identification (shared points principle, fixed points principle and planar rotations principle). The principles are based on the dynamic model of the robot, instead of the energy model, and they are suitable for all kinds of robot, no matter fixed-base or floating-base, with or without closed loops.
	\item We develop a novel algorithm for dynamics regressor nullspace generator (DRNG) based on the three principles.  The algorithm is with theoretical $O(1)$-complexity after $O(N)$-complexity preprocessing algorithms, where $N$ is the number of rigid bodies contained in the robot. It is the best in literature, to the best knowledge of authors. 
\end{enumerate}
To evaluate the correctness of the proposed method, we present four demonstrations: the Puma560 serial manipulator, the Unitree G2 quadruped robot, the 2RRU-1RRS PKM, and the 2PRS-1PSR PKM. These examples cover a wide range of robotic structures, including open-chain manipulators, floating-base systems, and parallel robots with multiple closed kinematic loops. To the best of the authors’ knowledge, this represents the most comprehensive set of demonstrations currently available in the literature.

The paper is organized as follows. We fist introduce basic concepts and notations in the inertial parameter identification in Section \ref{sec: problem statement}. In Section \ref{sec: robot modeling}, we start with the dynamic model of a point mass with classical method, and introduce concepts in PGA to transform the model to the framework of PGA. Successively, we propose the ``tetrahedral-point (TP)" model of a rigid body and a robot system, and interpret the geometric meaning of the coefficients in regressor matrix. Furthermore, we propose three principles for base parameter analysis based on the TP model in Section \ref{sec: principles}, and develop the corresponding algorithms in Section \ref{sec: Algorithm}. The correctness of the proposed method and algorithms are evaluated in Section \ref{sec: demonstrations} through four examples. Finally, conclusions and future work are given in Section \ref{sec: conclusion}.

\section{Problem statement}
\label{sec: problem statement}
The dynamics of a robotic system can be formulated as 
\begin{equation}
	M(q)\ddot{q}+C(q,\dot{q}) + G(q) = \tau
\end{equation}
The fundamental fact is that the left hand side of the dynamic equation can be rearranged into a linear form about the inertial parameters
\begin{equation}
	\label{eq: Y pi = tau}
	Y(q,\dot{q}, \ddot{q}) \pi = \tau
\end{equation}
Then, the inertial parameters identification problem is summarized as a regression problem:
\begin{equation}
	\label{eq: identification}
	\min_{\pi} \frac{1}{2}\sum_{i=1}^{N_s}(Y_i \pi - \tau_i)^T(Y_i \pi - \tau_i)
\end{equation}
where $N_S$ is the number of samplings.

However, the regressor $Y(q,\dot{q}, \ddot{q})$ is rank-deficient because of the geometric constraints. It means that the parameters in $\pi$ can not be fully identified, causing infinite solutions and ill-condition problem of \eqref{eq: identification}. As a result, it is necessary to find out the base parameters that can be identified independently. This problem is referred as the base parameter analysis in inertial parameter identification.

Motivated from \cite{Wensing2024Geometric}, a method for the base parameter analysis is to find out all the basis of the null space of $Y(q,\dot{q}, \ddot{q})$ regardless of the value of $q,\dot{q}, \ddot{q}$.
\begin{equation}
	Null(Y) = \{\pi_{null}|Y(q,\dot{q}, \ddot{q}) \pi_{null} = 0, \forall q,\dot{q}, \ddot{q} \}
\end{equation}
Then, the space of base parameters is the complement space of $Null(Y)$.
\begin{equation}
	\label{eq: base space}
	B(Y) = Null(Y)^{\perp}
\end{equation}
In this paper, the subspace $Null(Y)$ is referred to as the dynamics regressor nullspace. The problem to be solved in this paper is how to determine its basis vectors given the geometric model of a robot.

Besides, the physical feasibility of a rigid body inertia is proved critical in identification (e.g. \cite{Sousa2014LMI} and \cite{Lee2020Convex}). For a rigid body, its spatial inertia is defined as a $6\times6$ matrix:
\begin{equation}
	\underline{I} = \left[\begin{array}{cc}
	\underline{J} & m [\underline{c}]\\
	m [\underline{c}]^T & m \underline{1}_3
	\end{array}\right]
\end{equation}
where $\underline{J}$ is the rotational inertia, $[\underline{c}]$ is the skew matrix of the center of mass, $m$ is the mass, and $\underline{1}_3$ is a 3rd-order identity matrix. It is found that the inertia can be transformed to a $4\times4$ matrix named as the pseudo-inertia matrix (\cite{Wensing2018LMI}):
\begin{equation}
	\underline{N} = \left[\begin{array}{cc}
	\underline{\Sigma}& m \underline{c}\\
	m \underline{c}^T & m
	\end{array}\right]
\end{equation}
where $\underline{\Sigma} = \frac{1}{2} tr(\underline{J}) \underline{1}_3 - \underline{J}$ is the second moments of inertia. The physical feasibility is to require that $\underline{N}$ is a symmetric positive-definite matrix, which can be formulates as:
\begin{equation}
	\underline{N} \in SDP(4)
\end{equation}
where $SDP(4)$ is the manifold of 4th-order Symmetric Positive-Definite (SDP) matrix. However, to the best of authors' knowledge, there is no method to formulate the identification model of robot dynamics directly with $\underline{N}$. This issue will also be addressed in this paper.
\section{Robot dynamics modeling with PGA}
\label{sec: robot modeling}
In this section, we propose the ``tetrahedral-point (TP)" model of rigid body dynamics, and apply it to model robots. 

A systematic and rigorous introduction to robot dynamics modeling in PGA has been proposed in our previous work (\cite{sun_analytical_2023} and \cite{sun_high-order_2023}). In order to explain more concisely why PGA should be introduced in robot dynamics modeling, we will focused on the most related concepts in the theory of PGA, and won't introduce them until they are needed.
\subsection{The dynamic model of a point mass}
\begin{figure}[t]
	\centering
	\includegraphics[width=1\linewidth]{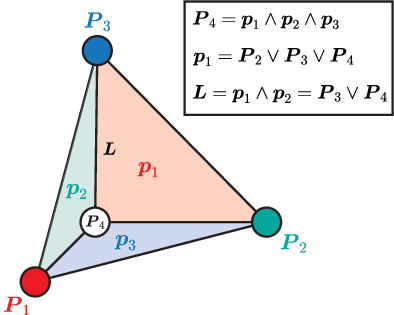}
	\caption{Illustration of the geometric meaning of ``$\wedge$" and "$\vee$". ``$\wedge$" implements the meet of two geometric elements, and ``$\vee$" implements the joint. The line $\boldsymbol{L}$ can be the meet of two planes or the join of two points.}
	\label{fig: wedge_vee}
\end{figure}
In classical mechanics, the dynamics of a point mass is determined by the linear and angular momentum theorem, which is formulated in coordinates-dependent form as
\begin{subequations}
	\label{eq: point mass dynamics}
	\begin{align}
		\underline{{ f}} &= m \underline{{ a}}\\
		\underline{{ \tau}} &= m \underline{{ p}} \times \underline{{a}}
	\end{align}
\end{subequations}
where $\underline{{f}}, \underline{{\tau}}\in \mathbb{R}^3$ are coordinates of the force and moment acting on the point, $\underline{{p}}, \underline{{a}} \in \mathbb{R}^3$ are coordinates of the position and acceleration of the point mass, and $m$ is the mass. Considering the Pl\"ucker coordinates of a line, the point mass dynamics can be reformulated as the following equations in a coordinates-free way:
\begin{equation}
	\label{eq: w=ma}
	\boldsymbol{w} = ma\ \operatorname*{Line}(\boldsymbol{P}, \ddot{\boldsymbol{P}})
\end{equation}
where $\boldsymbol{w}$ is the wrench acting on the point, $a=\|\underline{\boldsymbol{ a}}\|_{2}$ is the acceleration 2-norm, and $\operatorname*{Line}(\boldsymbol{P}, \ddot{\boldsymbol{P}})$ is the directed line passing $\boldsymbol{P}$ and directing in $\ddot{{\boldsymbol{P}}}$. Therefore, it is reasonable to introduce an algebra capable of modeling $\operatorname*{Line}(\boldsymbol{P}, \ddot{\boldsymbol{P}})$ in a coordinate-free way. Accordingly, we choose the projective geometric algebra $\mathbb{G}_{3,0,1}$.  

$\mathbb{G}_{3,0,1}$ is constructed based on the 4-dimensional dual vector space ${\mathbb{R}^{4}}^{*}$ to model the 3-dimensional Euclidean geometry. The ``dual" means that vectors are used to represent planes instead of points, which is the convention in the theory of projective geometry. Four orthogonal vectors $\boldsymbol{e}_i\ (i=0,1,2,3)$ can be chosen as the basis such that
\begin{subequations}
	\label{eq: inner product}
	\begin{align}
		\boldsymbol{e}_i \cdot \boldsymbol{e}_j &= 0,\ (i\neq j)\\
		\label{eq: self inner}\boldsymbol{e}_i \cdot \boldsymbol{e}_i &= \left\{ \begin{array}{l}
			1,\ i=1,2,3 \\
			0,\ i=0 
		\end{array}\right.
	\end{align}
\end{subequations}
In \eqref{eq: self inner}, it shows that the squared norms of 3, 0, and 1 basis vectors are 1, -1, and 0, respectively. Therefore, the subscript of $\mathbb{G}_{3,0,1}$ is $(3,0,1)$. 

The plane $ax+by+cz+d=0$ in Euclidean space is represented as the vector $\boldsymbol{p}=a\boldsymbol{e}_1 +b\boldsymbol{e}_2 + c\boldsymbol{e}_3 + d\boldsymbol{e}_0$ in $\mathbb{G}_{3,0,1}$. 
With the definition of planes, lines can be generated by the meet of two planes. The meet operator is implemented by the ``outer product ($\wedge$)", also named as the wedge product. It is defined as an asymmetric bilinear operator, such that
\begin{equation}
	\boldsymbol{p}_1 \wedge \boldsymbol{p}_2 = - \boldsymbol{p}_2 \wedge \boldsymbol{p}_1, \forall \boldsymbol{p}_1, \boldsymbol{p}_2 \in {\mathbb{R}^4}^*
\end{equation}
In the algebraic perspective, the outer product can be seen as the asymmetric part of the tensor product of two vectors. Like the tensor product, the outer product extends the vector space to linear spaces with higher grades. For example, $\boldsymbol{e}_1 \wedge \boldsymbol{e}_2$ produces a new basis vector with grade $2$, denoted as $\boldsymbol{e}_{12}$. In the case of $\mathbb{G}_{3,0,1}$, the linear space with grade $2,3,4$ can be generated, and they are called bivector space ${\bigwedge}^2{\mathbb{R}^4}^*$, trivector space ${\bigwedge}^3{\mathbb{R}^4}^*$ and pseudo scalar space ${\bigwedge}^4{\mathbb{R}^4}^*$, respectively. The basis vector of pseudo scalar space $\boldsymbol{e}_{0123}$ is also denoted as $\boldsymbol{I}$. Besides, the scalar space is treated as the space with grade $0$.

Because the outer product implement the meet of geometric elements, it turns out that bivectors represent lines (the meet of two planes) and trivectors represent points (the meet of three planes). Accordingly, a point $\boldsymbol{P}$ is represented in $\mathbb{G}_{3,0,1}$ as
\begin{equation}
	\label{eq: point coordinates}
	\boldsymbol{P} = x \boldsymbol{e}_{032} + y\boldsymbol{e}_{013} + z\boldsymbol{e}_{021} + \boldsymbol{e}_{123} \in {\bigwedge}^{3} {\mathbb{R}^4}^*
\end{equation}
where $[x,y,z,1]^T$ are the homogeneous coordinates. A line $\boldsymbol{L}$ is represented in $\mathbb{G}_{3,0,1}$ as
\begin{equation}
	\begin{split}
		\boldsymbol{L} =& d_x \boldsymbol{e}_{23} +d_y \boldsymbol{e}_{31} + d_z \boldsymbol{e}_{12} \\
		&+ p_x \boldsymbol{e}_{01} + p_y \boldsymbol{e}_{02} + p_z \boldsymbol{e}_{03} \in {\bigwedge}^{2} {\mathbb{R}^4}^*
	\end{split}
\end{equation}
where $[d_x,d_y,d_z, p_x, p_y, p_z]^T$ are the Pl\"ucker coordinates. As illustrated in Fig.\ref{fig: wedge_vee}, the line $\boldsymbol{L}$ can be written as the outer product of planes $\boldsymbol{p}_1$ and $\boldsymbol{p}_2$, and the point $\boldsymbol{P}_4$ can be written as the outer product of $\boldsymbol{p}_1$, $\boldsymbol{p}_2$, and $\boldsymbol{p}_3$

To this end, the motion of a point can be described as a time-parameterized curve in trivector space $\boldsymbol{P}(t)$. The acceleration $\ddot{\boldsymbol{P}}(t)$ is
\begin{equation}
	\label{eq: acceleration coordinates}
	\ddot{\boldsymbol{P}} = a_x \boldsymbol{e}_{032} + a_y\boldsymbol{e}_{013} + a_z\boldsymbol{e}_{021}  \in {\bigwedge}^{3} {\mathbb{R}^4}^*
\end{equation}
It is treated as an infinite point in projective space.
\begin{remark}
	The infinite points are common in projective geometry. They can be seen as the ``directions". For example, the velocity and acceleration of a point do not have ``positions" but have ``directions". Therefore, they are represented as infinite points in $\mathbb{G}_{3,0,1}$. Furthermore, there are also infinite lines and infinite plane in $\mathbb{G}_{3,0,1}$. The coordinates of an infinite line is in the form $[0,0,0, p_x, p_y, p_z]^T$. It can also indicates a direction $[p_x, p_y, p_z^T]$. For detailed discussion of them, readers are referred to \cite{sun_analytical_2023}.
\end{remark}

However, the line determined by $\boldsymbol{P}(t)$ and $\ddot{\boldsymbol{P}}(t)$ is still not defined since it requires the join operator instead of the meet operator. The vee product ``$\vee$" is defined to realize the join operator. Since only vee product of two points is involved in this paper, we provide a loose but practical definition of ``$\vee$". We first define the dual operator such that
\begin{subequations}
	\begin{align}
		\boldsymbol{e}_{032}^* = \boldsymbol{e}_1,\ \boldsymbol{e}_1^* = \boldsymbol{e}_{032}\\
		\boldsymbol{e}_{013}^* = \boldsymbol{e}_2,\ \boldsymbol{e}_2^* = \boldsymbol{e}_{013}\\
		\boldsymbol{e}_{021}^* = \boldsymbol{e}_3,\ \boldsymbol{e}_3^* = \boldsymbol{e}_{021}\\
		\boldsymbol{e}_{123}^* = \boldsymbol{e}_0,\ \boldsymbol{e}_0^* = \boldsymbol{e}_{123}
	\end{align}
\end{subequations}
The operator is linear, and the vee product of two trivectors is defined as
\begin{equation}
	\boldsymbol{P}_1 \vee \boldsymbol{P}_2 = {(\boldsymbol{P}_1^* \wedge \boldsymbol{P}_2^*)}^* , \forall \boldsymbol{P}_1, \boldsymbol{P}_2 \in {\bigwedge}^{3} {\mathbb{R}^4}^*
\end{equation}
To be more specific, if the coordinates of the two points are $[x_1, y_1, z_1, w_1]^T:=[\underline{p}_1^T, w_1]^T$ and $[x_2, y_2, z_2, w_2]^T:=[\underline{p}_2^T, w_2]^T$, the vee product returns
\begin{equation}
	\boldsymbol{L} =[\boldsymbol{e}_{23},\boldsymbol{e}_{31},\boldsymbol{e}_{12},\boldsymbol{e}_{01},\boldsymbol{e}_{02},\boldsymbol{e}_{03}] \left[\begin{array}{c} w_1\underline{p}_2 - w_2\underline{p}_1 \\
		\underline{p}_1 \times \underline{p}_2  \end{array}\right]
\end{equation}
where $\times$ is the cross product in vector algebra. It should be noted that if $w_i=1$, the point is called a Euclidean point, and if $w_i=0$, the point is called an infinite point.

Considering \eqref{eq: point mass dynamics} and \eqref{eq: acceleration coordinates}, the right hand side of \eqref{eq: w=ma} is formulated in $\mathbb{G}_{3,0,1}$ as
\begin{equation}
	\begin{split}
		&ma\ \operatorname*{Line}(\boldsymbol{P}, \ddot{\boldsymbol{P}}) \\
		=&[\boldsymbol{e}_{23},\boldsymbol{e}_{31},\boldsymbol{e}_{12},\boldsymbol{e}_{01},\boldsymbol{e}_{02},\boldsymbol{e}_{03}] \left[\begin{array}{c} m\underline{a} \\
			\underline{p} \times m\underline{a}  \end{array}\right]\\
		=& m \boldsymbol{P} \vee \ddot{\boldsymbol{P}}
	\end{split}
\end{equation}
Correspondingly, the wrench acting on the point mass is represented as a bivector:
\begin{equation}
	\label{eq: wrench}
	\begin{split}
		\boldsymbol{w}=&f_x\boldsymbol{e}_{23}+f_y\boldsymbol{e}_{31}+f_z\boldsymbol{e}_{12}\\
		&+\tau_x\boldsymbol{e}_{01}+\tau_y\boldsymbol{e}_{02}+\tau_z\boldsymbol{e}_{03}\in {\bigwedge}^{2} {\mathbb{R}^4}^*
	\end{split}
\end{equation}
In summary, the dynamics of a point mass can be formulated with PGA as
\begin{equation}
	\boldsymbol{w} = m \boldsymbol{P} \vee \ddot{\boldsymbol{P}}
\end{equation}
\subsection{The TP model of rigid body dynamics} 
The rigid body motion in PGA is represented as ``motors", and it acts on geometric elements through ``geometric product". The geometric product can be seen as the composition of inner product and outer product. Given two vectors $\boldsymbol{a}, \boldsymbol{b} \in {\mathbb{R}^4}^*$, their geometric product is defined as
\begin{equation}
	\boldsymbol{a} \boldsymbol{b}=\boldsymbol{a} \cdot \boldsymbol{b} + \boldsymbol{a} \wedge \boldsymbol{b}
\end{equation}
It is noted that the resultant element belongs to the direct sum space $\mathbb{R}\oplus {\mathbb{R}^4}^*$. Successively, the geometric product extends the original vector space ${\mathbb{R}^4}^*$ to the geometric algebra $\mathbb{G}_{3,0,1}$:
\begin{equation}
	\mathbb{G}_{3,0,1} = \mathbb{R}\oplus {\mathbb{R}^4}^* \oplus {\bigwedge}^2{\mathbb{R}^4}^* \oplus {\bigwedge}^3{\mathbb{R}^4}^* \oplus {\bigwedge}^4{\mathbb{R}^4}^* 
\end{equation}
Elements in $\mathbb{G}_{3,0,1}$ are linear combination of vectors with different grades, and they are called ``multivectors". Among the multivectors in $\mathbb{G}_{3,0,1}$, the ones only containing even-grade components can be applied to describe rigid body motion. They are denoted as $\boldsymbol{M}$ with coordinates:
\begin{equation}
	\begin{split}
		\boldsymbol{M}=& c_1 + c_2 \boldsymbol{e}_{23} + c_3 \boldsymbol{e}_{31} + c_4 \boldsymbol{e}_{12} \\
		&+ c_5 \boldsymbol{e}_{01} + c_6 \boldsymbol{e}_{02} + c_7 \boldsymbol{e}_{03} + c_8 \boldsymbol{I} \in \mathbb{G}_{3,0,1}^+
	\end{split}
\end{equation}
Its ``reverse" is defined as
\begin{equation}
	\begin{split}
		\widetilde{\boldsymbol{M}}=& c_1 - c_2 \boldsymbol{e}_{23} - c_3 \boldsymbol{e}_{31} - c_4 \boldsymbol{e}_{12} \\
		&- c_5 \boldsymbol{e}_{01} - c_6 \boldsymbol{e}_{02} - c_7 \boldsymbol{e}_{03} + c_8 \boldsymbol{I}
	\end{split}
\end{equation}
The Lie group of rigid body motion is formally defined as
\begin{equation}
	\boldsymbol{\mathcal{M}}_{3,0,1}=\{\boldsymbol{M}\in \mathbb{G}_{3,0,1}^+ | \widetilde{\boldsymbol{M}} \boldsymbol{M} = 1\}
\end{equation}
This Lie group is called the motor group, and its elements are motors. The identity element is $1$, and the group product is the geometric product.

Given an arbitrary element $\boldsymbol{X}^0 \in \mathbb{G}_{3,0,1}$, which can be a point, a line, or a plane, its configuration after a rigid body motion $\boldsymbol{M}$ is
\begin{equation}
	\boldsymbol{X} = \widetilde{\boldsymbol{M}} \boldsymbol{X}^0 \boldsymbol{M}
\end{equation}
Take time derivatives on both sides, the velocity of $\boldsymbol{X}$ is
\begin{equation}
	\dot{\boldsymbol{X}}=\frac{\boldsymbol{X}\boldsymbol{V} - \boldsymbol{V}\boldsymbol{X}}{2} = \boldsymbol{X} \times \boldsymbol{V}
\end{equation}
where ``$\times$" is the cross product in $\mathbb{G}_{3,0,1}$, and $\boldsymbol{V}$ is the spatial velocity of the rigid body motion. It is defined as
\begin{equation}
	\boldsymbol{V}=2\widetilde{\boldsymbol{M}}\dot{\boldsymbol{M}} \in {\bigwedge}^2 {\mathbb{R}^4}^*
\end{equation}
It is also noted that the coordinates of $\boldsymbol{V}$ are exactly the coordinates of a spatial twist (\cite{Lynch2017MR}). That is
\begin{equation}
	\begin{split}
		\boldsymbol{V} =& \omega_x \boldsymbol{e}_{23} + \omega_y \boldsymbol{e}_{31} + \omega_z \boldsymbol{e}_{12}\\
		&+ v_x \boldsymbol{e}_{01} + v_y \boldsymbol{e}_{02} + v_z \boldsymbol{e}_{03}
	\end{split}
\end{equation}
where the definition of $[\omega_x, \omega_y, \omega_z, v_x, v_y, v_z]^T$ is the same as the velocity twist written in the fixed spatial frame. 
Accordingly, the Lie algebra $\boldsymbol{m}_{3,0,1}$ of $\boldsymbol{\mathcal{M}}_{3,0,1}$ is exactly the bivector space. Furthermore, with the definition of wrench in \eqref{eq: wrench}, a duality can be defined through the outer product:
\begin{equation}
	\label{eq: duality}
	\boldsymbol{w} \wedge \boldsymbol{V} = \boldsymbol{V} \wedge \boldsymbol{w} = P \boldsymbol{I} \in {\bigwedge}^4{\mathbb{R}^4}^* \cong \mathbb{R}
\end{equation}
where $P\in \mathbb{R}$ is the power of the wrench $\boldsymbol{w}$. In the following part, we will omit $\boldsymbol{I}$ in $P \boldsymbol{I}$. Therefore, the wrench $\boldsymbol{w}$ belongs to the dual Lie algebra $\boldsymbol{m}_{3,0,1}^*$, which is also identical to bivector space.

Furthermore, the acceleration of $\boldsymbol{X}$ is
\begin{equation}
	\ddot{\boldsymbol{X}} = \dot{\boldsymbol{X}} \times \boldsymbol{V} + \boldsymbol{X} \times \dot{\boldsymbol{V}}
\end{equation}

A rigid body is defined as a set of points, denoted as $\mathcal{B}$, that all points follow the same rigid body motion. Given the configuration $\boldsymbol{M}$, velocity $\boldsymbol{V}$ and acceleration $\dot{\boldsymbol{V}}$ of the rigid body, the motion of points $\boldsymbol{P}$ contained in $\mathcal{B}$ is
\begin{align}
	\label{eq: position}\boldsymbol{P} &= \widetilde{\boldsymbol{M}} \boldsymbol{P}^0 \boldsymbol{M}\\
	\label{eq: vel}\dot{\boldsymbol{P}} &= \boldsymbol{P} \times \boldsymbol{V}\\
	\label{eq: acc}\ddot{\boldsymbol{P}} &= \boldsymbol{P} \times \dot{\boldsymbol{V}} + \dot{\boldsymbol{P}} \times \boldsymbol{V}
\end{align}
where $\boldsymbol{P}^0$ is the initial position of the point when $\boldsymbol{M}=1$. With the density distribution function $\rho(\boldsymbol{P}): \mathcal{B} \mapsto \mathbb{R}$, the mass of the rigid body is calculated by integration:
\begin{equation}
	m =\int_{\mathcal{B}} {\rm d}m= \int_{\mathcal{B}} \rho(\boldsymbol{P}) {\rm d} \boldsymbol{P}
\end{equation}
where ${\rm d} \boldsymbol{P}$ is the volume element, and ${\rm d}m$ is the differential mass. Therefore, the dynamics of the rigid body can be formulated as
\begin{equation}
	\begin{split}
		\boldsymbol{w}=\int_{\mathcal{B}} {\rm d} m \left(\boldsymbol{P} \vee \ddot{\boldsymbol{P}}\right) 
	\end{split}
\end{equation}
Suppose four points $\boldsymbol{E}_1, \boldsymbol{E}_2, \boldsymbol{E}_3, \boldsymbol{E}_4$ are chosen as the basis of trivector space. Then, the above equation can be expanded w.r.t. this basis as
\begin{equation}
	\begin{split}
		\boldsymbol{w}=& \int_{\mathcal{B}} {\rm d} m \left[\left(\sum_{m=1}^{4} c_m\boldsymbol{E}_m\right) \vee \left(\sum_{n=1}^{4} c_n\ddot{\boldsymbol{E}}_n\right)\right]\\
		=& \sum_{m=1}^{4}\sum_{n=1}^{4} \left( \int_{\mathcal{B}} {\rm d} m c_m c_n  \right) \left(\boldsymbol{E}_m \vee \ddot{\boldsymbol{E}}_n\right)\\
		=& \sum_{m=1}^{4}\sum_{n=1}^{4} N_{mn} \dot{\boldsymbol{\Pi}}_{mn}
	\end{split}
\end{equation}
From the definition, it can be proved that $N_{mn}=N_{nm}$. Furthermore, the above equation can be rearranged so that the inertial parameters can be characterized by a 4-th order symmetric positive-definite matrix:
\begin{equation}
	\label{eq: TP model rigid body}
	\boldsymbol{w} = tr(N^T \underline{\dot{\boldsymbol{\Pi}}}) := \underline{\dot{\boldsymbol{\Pi}}} * N = N * \underline{\dot{\boldsymbol{\Pi}}}
\end{equation}
where $\underline{N}$ is a 4-th order square matrix whose element in the m-th row and the n-th column is $N_{mn}$, and $\underline{\dot{\boldsymbol{\Pi}}}$ is also a 4-th order square matrix whose element in the m-th row and the n-th column is a bivector $\dot{\boldsymbol{\Pi}}$. This equation of rigid body dynamics was first proposed in \cite{sun_analytical_2023}. It was applied in robot dynamics computation, and has not been discussed in the field of inertial parameters identification. It is noted that the key to obtain \eqref{eq: TP model rigid body} is to choose four basis trivectors of the trivector space. We name these four basis vectors $\boldsymbol{E}_1, \boldsymbol{E}_2, \boldsymbol{E}_3, \boldsymbol{E}_4$ as tetrahedral points of a rigid body, because the four points can be seen as the vertices of a tetrahedron. For example, the basis trivectors can be chosen as illustrated in Fig.\ref{fig: TP points}a. They are four tetrahedron vertices, and the edges $OP_x$, $OP_y$, and $OP_z$ are orthogonal to each other. The rigid body dynamics model \eqref{eq: TP model rigid body} is named as the tetrahedral-point (TP) model of a rigid body.

\begin{figure}[t]
	\centering
	\includegraphics[width=1\linewidth]{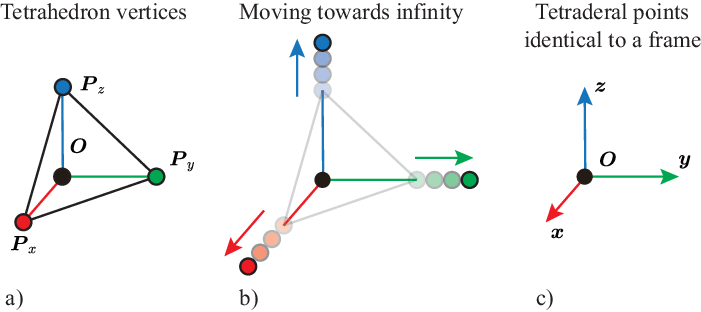}
	\caption{Illustration of the ``tetrahedral points". The four tetrahedron vertices in a) can be chosen as a set of tetrahedral points. As shown in b), move $\boldsymbol{P}_x, \boldsymbol{P}_y, \boldsymbol{P}_z$ towards the infinity, and they will converge to three infinite points $\boldsymbol{x}, \boldsymbol{y}, \boldsymbol{z}$, which are denoted as arrows in c).}
	\label{fig: TP points}
\end{figure}

We propose a natural choice of the tetrahedral points. As illustrated in Fig.\ref{fig: TP points}b, move the points $\boldsymbol{P}_x$, $\boldsymbol{P}_y$, and $\boldsymbol{P}_z$ towards the infinity. In the theory of projective geometry, they will terminate at three infinite points. In Fig.\ref{fig: TP points}c, the three infinite points are denoted with arrows. As a result, the tetrahedral points are composed of a point $\boldsymbol{O}$ and three directions $\boldsymbol{x}, \boldsymbol{y}, \boldsymbol{z}$, which turns out to be a frame. In robotics, it is common to attach a frame to a rigid body, and describe its motion with homogeneous matrices. The initial configuration of the rigid body can be represented by a constant homogeneous matrix ${\underline{T}_{b}^s}^0$:
\begin{equation}
	{\underline{T}_{b}^s}^0= \left[\begin{array}{cc} 
		\underline{R}^s_b &\underline{p}^s\\
		\underline{0} & 1\end{array}\right] 
	= \left[\begin{array}{cccc}r_{11} & r_{12} & r_{13} & p_1\\
		r_{21} & r_{22} & r_{23} & p_2\\
		r_{31} & r_{32} & r_{33} & p_3\\
		0 & 0 & 0 & 1\end{array}\right]
\end{equation}
where $\underline{R}^s_b$ is the rotation matrix of the body-fixed frame with respect to (w.r.t.) the fixed spatial frame, and $\underline{p}^s$ is the coordinates of the origin of the body-fixed frame. We found that each column of ${\underline{T}_{b}^s}^0$ can be seen as the coordinates of a point in $\mathbb{G}_{3,0,1}$. For example, as illustrated in Fig.\ref{fig: body fixed frame}, the coordinates of points $\boldsymbol{x}^0, \boldsymbol{y}^0, \boldsymbol{z}^0, \boldsymbol{O}^0$ w.r.t. the spatial fixed frame compose the homogeneous matrix ${\underline{T}_{b}^s}^0=[\underline{x}^0, \underline{y}^0, \underline{z}^0, \underline{O}^0]$.  It implies a natural choice of the tetrahedral points:
\begin{subequations}
	\label{eq: initial TP}
	\begin{align}
		\boldsymbol{E}_1^0 =& r_{11} \boldsymbol{e}_{032} + r_{21}\boldsymbol{e}_{013} + r_{31}\boldsymbol{e}_{021}\\
		\boldsymbol{E}_2^0 =& r_{13} \boldsymbol{e}_{032} + r_{22}\boldsymbol{e}_{013} + r_{32}\boldsymbol{e}_{021}\\
		\boldsymbol{E}_3^0 =& r_{13} \boldsymbol{e}_{032} + r_{23}\boldsymbol{e}_{013} + r_{33}\boldsymbol{e}_{021}\\
		\boldsymbol{E}_4^0 =& p_{1} \boldsymbol{e}_{032} + p_{2}\boldsymbol{e}_{013} + p_{3}\boldsymbol{e}_{021} + \boldsymbol{e}_{123}
	\end{align}
\end{subequations}
where the basis trivectors $\boldsymbol{e}_{032}, \boldsymbol{e}_{013}, \boldsymbol{e}_{021}, \boldsymbol{e}_{123}$ can be seen as the x-, y-, z-axis and the origin of the spatial fixed frame as illustrated in Fig.\ref{fig: body fixed frame}, respectively. The above formulation provides the initial configuration of the tetrahedral points when the configuration of the rigid body $\boldsymbol{M}=1$. During the rigid body motion, their positions, velocities and accelerations are updated according to \eqref{eq: position}, \eqref{eq: vel}, and \eqref{eq: acc}. In order to keep consistent with classical method, we apply the following denotation:
\begin{subequations}
	\begin{align}
		\boldsymbol{x} = \boldsymbol{E}_1\\
		\boldsymbol{y} = \boldsymbol{E}_2\\
		\boldsymbol{z} = \boldsymbol{E}_3\\
		\boldsymbol{O} = \boldsymbol{E}_4
	\end{align}
\end{subequations}
where $\boldsymbol{E}_i$ are points defined in \eqref{eq: initial TP}. As a result, any point $\boldsymbol{P}$ in the Euclidean space can be written as
\begin{equation}
	\boldsymbol{P} = x\boldsymbol{x} + y\boldsymbol{y} + z\boldsymbol{z} + \boldsymbol{O}
\end{equation}
Accordingly, the inertial matrix $\underline{N}$ turns out to be
\begin{equation}
	\begin{split}
		\underline{N} =& \int_{\mathcal{B}}  {\rm d}m \left[\begin{array}{c}
			x\\
			y\\
			z\\
			1\end{array}\right] \left[\begin{array}{c}
			x\\
			y\\
			z\\
			1\end{array}\right]^T\\
		=& \left[\begin{array}[]{cccc}
			MXX & MXY & MXZ & MX\\
			MXY & MYY & MYZ & MY\\
			MXZ & MYZ & MZZ & MZ\\
			MX & MY & MZ & M
		\end{array}\right]\\
		=&\left[\begin{array}[]{cc}
			\underline{\Sigma} & \underline{h}\\
			\underline{h}^T & m
		\end{array}\right]
	\end{split}
\end{equation}
It is exactly the pseudo inertial matrix (\cite{Wensing2018LMI}), where $m$ is the mass of the rigid body, $\underline{h}$ is the first moment of inertia and $\underline{\Sigma}$ is the second moments of inertia.

\begin{figure}[t]
	\centering
	\includegraphics[width=0.9\linewidth]{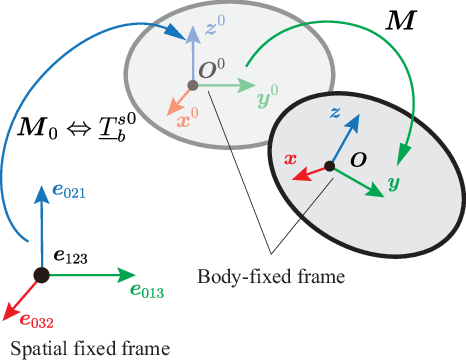}
	\caption{The TP model of a rigid body. The spatial fixed frame and the body-fixed frames can be interpreted as sets of tetrahedral points in $\mathbb{G}_{3,0,1}$. The initial configuration of the rigid body can be described with a motor $\boldsymbol{M}_0$, which is equivalent to ${\underline{T}_{b}^s}^0$ as explained in \eqref{eq: equivlent M T}. The configuration of the body-fixed tetrahedral points is determined by the motor $\boldsymbol{M}$. For example, $\boldsymbol{O} = \widetilde{\boldsymbol{M}} \boldsymbol{O}^0 \boldsymbol{M}$.}
	\label{fig: body fixed frame}
\end{figure}

\begin{remark}
	It is noted that ${\underline{T}_{b}^s}^0$ can also be seen as a rigid body motion from the fixed spatial frame to the body-fixed frame. Therefore, it is identical to a motor $\boldsymbol{M}_0$ in $\mathbb{G}_{3,0,1}$. Each column of ${\underline{T}_{b}^s}^0$ turns out to be the coordinates of $\boldsymbol{e}_{032}, \boldsymbol{e}_{013}, \boldsymbol{e}_{021}, \boldsymbol{e}_{123}$ after the rigid body motion $\boldsymbol{M}_0$. That is
	\begin{equation}
		\label{eq: equivlent M T}
		\begin{split}
			{\underline{T}_{b}^s}^0 = &\left[\widetilde{\boldsymbol{M}}_0 \boldsymbol{e}_{032} \boldsymbol{M}_0, \widetilde{\boldsymbol{M}}_0 \boldsymbol{e}_{013} \boldsymbol{M}_0, \right. \\
			& \left. \widetilde{\boldsymbol{M}}_0 \boldsymbol{e}_{021} \boldsymbol{M}_0, \widetilde{\boldsymbol{M}}_0 \boldsymbol{e}_{123} \boldsymbol{M}_0 \right]
		\end{split}
	\end{equation}
\end{remark}
\subsection{The TP model of robot dynamics} 
\label{sec: rigid body TP}
A robot can be treated as a multibody system composed of rigid bodies and holonomic constrained joints (flexible and soft robots are out of the scope of this paper). Its configuration is characterized by independent generalized coordinates $\underline{q}_a$. According to \eqref{eq: duality} and the theory of Lagrange mechanics, the robot dynamics can be formulated in variational form as
\begin{equation}
	\sum_{i=1}^{N_B}\left(\boldsymbol{w}_i - \underline{\dot{\boldsymbol{\Pi}}}_i * \underline{N}_i\right) \wedge \delta\boldsymbol{m}_i=0
\end{equation}
where $N_B$ is the number of rigid bodies, and $\boldsymbol{w}_i$ is the total wrench acting on body $i$. Besides, $\delta \boldsymbol{m} = 2 \widetilde{\boldsymbol{M}} \delta \boldsymbol{M} \in \boldsymbol{m}_{3,0,1}$ is the Lie group variation of the rigid body configuration $\boldsymbol{M}$. Considering the kinematics constraints, the variation $\delta\boldsymbol{m}_i$ can be formulated as:
\begin{equation}
	\delta\boldsymbol{m}_i = \sum_{j=1}^{N_A}\boldsymbol{L}^i_j \delta {q_a}_j
\end{equation}
where $N_A$ is the number of independent generalized coordinates. $\boldsymbol{L}^i_j \in \boldsymbol{m}_{3,0,1}$ is called the Jacobian line. As a result, the dynamics of the robot are formulated as $N_A$ ordinary differential equations:
\begin{equation}
	\tau_j = \sum_{i=1}^{N_B} \underline{R}_{j,i} * \underline{N}_i
\end{equation}
where $j=1,2,...,N_A$. The left hand side is called the generalized forces $\tau_j=\sum_{i=1}^{N_B} \boldsymbol{w}_i \wedge \boldsymbol{L}_j^i$. Generally, the generalized forces are the actuation forces / torques. On the right hand side, $\underline{R}_{j,i}$ is a $4\times 4$ real number valued matrix. The element on the m-th row and the n-th column is calculated by
\begin{equation}
	\label{eq: basic coeff}
	R_{j,i,m,n} = \boldsymbol{L}_{j}^{i} \wedge (\boldsymbol{E}^{i}_{m} \vee \ddot{\boldsymbol{E}}^{i}_{n})
\end{equation}
where $j=1,2,...N_A$ are the indices of actuators, $i=1,...,N_B$ are the indices of rigid bodies, and $m,n=1,2,3,4$ are the indices of tetrahedral points. We call $R_{j,i,m,n}$ as a dynamics regressor coefficient.

We formally name the matrix $\underline{R}_{j,i}$ as the dynamics regressor of body i w.r.t. the generalized coordinate j. With proper concatenation, we can obtain a $N_A \times N_B \times 4 \times 4 $ tensor $\underline{\mathcal{R}}$ composed of $\underline{R}_{j,i}$, and a $N_B \times 4 \times 4$ tensor $\underline{\mathcal{N}}$ composed of $\underline{N}_i$. All the inertial parameters of the robot are contained in $\underline{\mathcal{N}}$, and the generalized forces can be calculated by tensor product and contraction:
\begin{equation}
	\tau_j = \sum_{i,m,n} \mathcal{R}_{j,i,m,n} \mathcal{N}_{i,m,n}
\end{equation}
Therefore, we name the tensor $\underline{\mathcal{R}}$ as the dynamics regressor of the robot, and the above dynamic model as the tetrahedral-point (TP) model of the robot dynamics.

\section{Principles of base parameter analysis}
\label{sec: principles}
According to \eqref{eq: basic coeff}, we conclude that the dependency and unidentifiablity of inertial parameters are basically caused by the geometric relations between points ($\boldsymbol{E}_m^i$ and $\ddot{\boldsymbol{E}}_n^i$) and lines ($\boldsymbol{L}_j^i$). Based on this observation, we propose three principles for base parameter analysis in inertial parameters identification.

\subsection{Principle 1: shared points}
For two rigid bodies connected by a holonomic constrained joint, the constraints can be formulated in variation forms as
\begin{equation}
	\begin{split}
		\delta \boldsymbol{m}_{i} =& \delta\boldsymbol{m}_{\lambda(i)} + \boldsymbol{L}_{k}\delta q_{k}\\
		=&\delta\boldsymbol{m}_{\lambda(i)} + \sum_{j=1}^{N_A} \boldsymbol{L}_{k} \frac{\partial q_{k}}{\partial q_j}\delta q_{j}
	\end{split}
\end{equation}
where $\boldsymbol{L}_{k}$ is the constraint lines determined by the joint $k$. $\lambda(i)$ returns the index of the body $i$'s parent through the joint.
Therefore, the Jacobian lines have the relation
\begin{equation}
	\label{eq: jacobian line}
	\boldsymbol{L}^i_j = \boldsymbol{L}^{\lambda(i)}_j + \boldsymbol{L}_{k} \frac{\partial q_{k}}{\partial q_j}
\end{equation}
\begin{remark}
	If the joint $k$ is with more than 1 DOF, $\boldsymbol{L}_k \delta q_{k}$ is short for $\sum_{d=1}^{K_{DOF}}\boldsymbol{L}_{k_d} \delta q_{k_d}$, and so is $\boldsymbol{L}_{k} \frac{\partial q_{k}}{\partial q_j}$. For example, two constraint lines $\boldsymbol{L}_{k_1}$ and $\boldsymbol{L}_{k_2}$ exist in a universal joint ($K_{DOF}=2$). 
\end{remark}

Five common joints are considered in this paper, including revolute (R), prismatic (P), universal (U) , spherical (S), floating (F) joints. In the perspective of projective geometry, at least one point is shared by the two rigid bodies connected by these joints except F joints, and the shared points are located on the line $\boldsymbol{L}_{k}$. Suppose one shared point is $\boldsymbol{P}_{\alpha}$:
\begin{equation}
	\boldsymbol{P}_{\alpha}^i \equiv \boldsymbol{P}_{\alpha}^{\lambda(i)}
\end{equation}
The shared point $\boldsymbol{P}_{\alpha}$ can be written as the linear combination of the tetrahedral points of both rigid body $i$ and $\lambda(i)$.
\begin{equation}
	\label{eq: shared coordinates}
	\begin{split}
		\boldsymbol{P}_{\alpha} =& \sum_{t=1}^{4}c_{\alpha,t}^{i} \boldsymbol{E}^i_{t} = \sum_{t=1}^{4}c_{\alpha,t}^{\lambda(i)} \boldsymbol{E}^{\lambda(i)}_{t}\\
		=& \underline{\boldsymbol{E}}^i \underline{c_{\alpha}^{i}}  = \underline{\boldsymbol{E}}^{\lambda(i)} \underline{c_{\alpha}^{\lambda(i)}}
	\end{split}
\end{equation}
where $\underline{\boldsymbol{E}}^i = [\boldsymbol{E}^i_{1}, \boldsymbol{E}^i_{2}, \boldsymbol{E}^i_{3}, \boldsymbol{E}^i_{4}]$ is a row array collecting all the tetrahedral points, and $ \underline{c_{\alpha}^{i}}$ is a column array collecting all the coefficients.
As a result, we can obtain the following equality,
\begin{equation}
	\label{eq: coeff dependency}
	\begin{split}
		\boldsymbol{L}_{j}^{i} \wedge (\boldsymbol{P}_{\alpha} \vee \ddot{\boldsymbol{P}}_{\alpha}) =& \boldsymbol{L}_{j}^{\lambda(i)} \wedge (\boldsymbol{P}_{\alpha} \vee \ddot{\boldsymbol{P}}_{\alpha})\\
		&+ \frac{\partial q_{k}}{\partial q_j}\boldsymbol{L}_{k}\wedge (\boldsymbol{P}_{\alpha} \vee \ddot{\boldsymbol{P}}_{\alpha})\\
		=&\boldsymbol{L}_{j}^{\lambda(i)} \wedge (\boldsymbol{P}_{\alpha} \vee \ddot{\boldsymbol{P}}_{\alpha})
	\end{split}
\end{equation}
The second equality is caused by the following theorem and its corollary.

\begin{theorem}
	\label{thm: lines on plane}
	Two lines (simple bivectors \footnote{Simple bivectors means the bivector can be rewritten as the outer product of two vectors. In $\mathbb{G}_{3,0,1}$, not all bivectors are simple.})  $\boldsymbol{L}_1$ and $\boldsymbol{L}_2$ lie in the same plane if and only if $\boldsymbol{L}_1 \wedge \boldsymbol{L}_2 = 0$
\end{theorem}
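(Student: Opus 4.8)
The plan is to exploit the defining feature of a simple bivector: in $\mathbb{G}_{3,0,1}$ a bivector is simple precisely when it factors as the meet (outer product) of two plane-vectors. Since the theorem assumes $\boldsymbol{L}_1$ and $\boldsymbol{L}_2$ are simple, I would first write $\boldsymbol{L}_1 = \boldsymbol{a}_1 \wedge \boldsymbol{b}_1$ and $\boldsymbol{L}_2 = \boldsymbol{a}_2 \wedge \boldsymbol{b}_2$, where each of $\boldsymbol{a}_1, \boldsymbol{b}_1, \boldsymbol{a}_2, \boldsymbol{b}_2 \in {\mathbb{R}^4}^*$ is a plane containing the respective line. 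Then $\boldsymbol{L}_1 \wedge \boldsymbol{L}_2 = \boldsymbol{a}_1 \wedge \boldsymbol{b}_1 \wedge \boldsymbol{a}_2 \wedge \boldsymbol{b}_2$ is a grade-$4$ element, hence a scalar multiple of the pseudoscalar $\boldsymbol{I}$; under the identification ${\bigwedge}^4 {\mathbb{R}^4}^* \cong \mathbb{R}$ this scalar is the $4\times 4$ determinant of the four plane-vectors. The statement $\boldsymbol{L}_1 \wedge \boldsymbol{L}_2 = 0$ is therefore equivalent to the linear dependence of $\{\boldsymbol{a}_1, \boldsymbol{b}_1, \boldsymbol{a}_2, \boldsymbol{b}_2\}$, and the whole proof reduces to relating this dependence to coplanarity.

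For the forward direction (coplanar $\Rightarrow$ wedge vanishes) I would use the freedom in the factorization. If both lines lie in a common plane $\boldsymbol{\pi}$, then, since any plane through a line may serve as one of its two defining planes, I can choose the factorizations with $\boldsymbol{b}_1 = \boldsymbol{b}_2 = \boldsymbol{\pi}$, so that $\boldsymbol{L}_1 = \boldsymbol{a}_1 \wedge \boldsymbol{\pi}$ and $\boldsymbol{L}_2 = \boldsymbol{a}_2 \wedge \boldsymbol{\pi}$. Then $\boldsymbol{L}_1 \wedge \boldsymbol{L}_2 = \boldsymbol{a}_1 \wedge \boldsymbol{\pi} \wedge \boldsymbol{a}_2 \wedge \boldsymbol{\pi}$, which vanishes immediately, because the repeated factor $\boldsymbol{\pi}$ forces the outer product to zero by antisymmetry.

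For the reverse direction (wedge vanishes $\Rightarrow$ coplanar) I would pass to the dual reading of the determinant condition: linear dependence of the four plane-covectors means they do not span ${\mathbb{R}^4}^*$, so their common null space is nontrivial and contains a point $\boldsymbol{Q}$ incident with all four planes. This $\boldsymbol{Q}$ lies on $\boldsymbol{L}_1 = \boldsymbol{a}_1 \wedge \boldsymbol{b}_1$ and on $\boldsymbol{L}_2 = \boldsymbol{a}_2 \wedge \boldsymbol{b}_2$, so the two lines meet; and in projective $3$-space two lines meeting in a point always span a common plane, which yields coplanarity and closes the equivalence.

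The main obstacle is controlling the degenerate configurations in the reverse direction: the common point $\boldsymbol{Q}$ may be an infinite point (the parallel-line case, where the shared plane is still well defined), the two lines may coincide, or three of the four defining planes may already be dependent, so that one must confirm a genuine incident point and a well-defined common plane can still be extracted rather than a spurious solution. I expect to handle these by casework on the rank of $\{\boldsymbol{a}_1, \boldsymbol{b}_1, \boldsymbol{a}_2, \boldsymbol{b}_2\}$. As an independent cross-check against sign and convention errors, I would also verify the identity $\boldsymbol{L}_1 \wedge \boldsymbol{L}_2 = (\underline{d}_1 \cdot \underline{p}_2 + \underline{p}_1 \cdot \underline{d}_2)\,\boldsymbol{I}$ directly in Pl\"ucker coordinates, recognizing the right-hand coefficient as the classical reciprocal product whose vanishing is the textbook incidence condition for two lines.
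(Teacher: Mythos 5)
Your proof is correct, and the forward direction (coplanar implies vanishing wedge) coincides exactly with the paper's: factor both lines through the common plane and let antisymmetry kill the repeated factor. Where you genuinely diverge is the converse. The paper disposes of it in one sentence, asserting that $\boldsymbol{L}_1 \wedge \boldsymbol{L}_2 = 0$ means the four factor-planes are linearly dependent and that this ``results in'' the shared-factor form $\boldsymbol{L}_i = \boldsymbol{a} \wedge \boldsymbol{b}_i$; the missing justification is most directly supplied by a dimension count, since $\dim\bigl(\mathrm{span}(\boldsymbol{a}_1,\boldsymbol{b}_1) + \mathrm{span}(\boldsymbol{a}_2,\boldsymbol{b}_2)\bigr) \le 3$ forces the two covector spans to intersect in a common plane $\boldsymbol{a}$, which is then the common factor. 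You instead dualize: the dependent covectors have a nontrivial annihilator, giving a point $\boldsymbol{Q}$ incident with both lines, and concurrent lines in projective $3$-space are coplanar. Both routes are valid; the span-intersection argument hands you the common plane in one step and matches the paper's intended factorization more closely, while your point-incidence argument buys a cleaner treatment of the parallel case (the shared point is simply ideal) at the cost of one extra projective fact and the rank casework you flag. Your Pl\"ucker-coordinate identification of $\boldsymbol{L}_1 \wedge \boldsymbol{L}_2$ with the reciprocal product is a sound independent check that the paper does not include.
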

\begin{proof}
	Since $\boldsymbol{L}_1$ and $\boldsymbol{L}_2$ are simple bivectors, they can be rewritten as the outer product of two vectors (two planes).
	
	If $\boldsymbol{L}_1$ and $\boldsymbol{L}_2$ lie in the same plane $\boldsymbol{a}$, then
	\begin{subequations}
		\label{eq: co planes}
		\begin{align}
			\boldsymbol{L}_1 &= \boldsymbol{a} \wedge \boldsymbol{b}_1\\
			\boldsymbol{L}_2 &= \boldsymbol{a} \wedge \boldsymbol{b}_2
		\end{align}
	\end{subequations}
	It results in $\boldsymbol{L}_1 \wedge \boldsymbol{L}_2 = \boldsymbol{a}\wedge\boldsymbol{b}_1 \wedge \boldsymbol{a} \wedge \boldsymbol{b}_2 = \boldsymbol{0}$ because of the property of $\wedge$.
	
	On the other hand, $\boldsymbol{L}_1 \wedge \boldsymbol{L}_2 = \boldsymbol{0}$ means $\boldsymbol{L}_1$ and $\boldsymbol{L}_2$ contain linear dependent vectors. It results in \eqref{eq: co planes}, which means the two lines lie in the same plane $\boldsymbol{a}$
\end{proof}

\begin{corollary}
	If a point $\boldsymbol{P}$ (Euclidean or infinite) is located on the line $\boldsymbol{L}$, then the following equation holds for any point $\boldsymbol{Q}$.
	\begin{equation}
		\label{eq: points on line}
		\boldsymbol{L} \wedge (\boldsymbol{P}\vee\boldsymbol{Q})=\boldsymbol{0}
	\end{equation}
\end{corollary}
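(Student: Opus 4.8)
The plan is to reduce the corollary to Theorem~\ref{thm: lines on plane} by recognizing both factors of the wedge as coplanar lines. First I would observe that the join $\boldsymbol{P} \vee \boldsymbol{Q}$ is itself a line, i.e.\ a simple bivector: the explicit coordinate formula for the vee product of two points given earlier returns a bivector carrying Pl\"ucker coordinates, which is precisely a geometric line. Hence the theorem is applicable with $\boldsymbol{L}_1 := \boldsymbol{L}$ and $\boldsymbol{L}_2 := \boldsymbol{P} \vee \boldsymbol{Q}$, and there is no need to worry about the non-simple bivectors flagged in the footnote.

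Next I would identify the common point. By construction $\boldsymbol{P}$ lies on its own join $\boldsymbol{P} \vee \boldsymbol{Q}$, and by hypothesis $\boldsymbol{P}$ lies on $\boldsymbol{L}$. Thus $\boldsymbol{L}$ and $\boldsymbol{P} \vee \boldsymbol{Q}$ share the point $\boldsymbol{P}$. Two lines in projective three-space that meet in a point span a common plane, so they are coplanar; invoking the ``coplanar $\Rightarrow$ wedge vanishes'' direction of Theorem~\ref{thm: lines on plane} then yields $\boldsymbol{L} \wedge (\boldsymbol{P} \vee \boldsymbol{Q}) = \boldsymbol{0}$ at once.

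The one place that needs care is the qualifier ``Euclidean or infinite'' in the hypothesis. When $\boldsymbol{P}$ is a Euclidean point the two lines intersect in the ordinary sense and coplanarity is immediate. When $\boldsymbol{P}$ is infinite, ``$\boldsymbol{P}$ on $\boldsymbol{L}$'' means $\boldsymbol{P}$ is the direction of $\boldsymbol{L}$; then $\boldsymbol{P} \vee \boldsymbol{Q}$ is the line through $\boldsymbol{Q}$ parallel to $\boldsymbol{L}$, and the two parallel lines still meet at the shared infinite point $\boldsymbol{P}$, hence remain coplanar. Treating finite and infinite shared points on the same footing is exactly what the projective setting buys us, so the single phrase ``share the point $\boldsymbol{P}$'' covers both cases without a separate argument.

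I expect the main obstacle to be bookkeeping rather than computation: one must confirm that the hypothesis of the theorem (simplicity of \emph{both} bivectors) genuinely holds for an arbitrary join, and that the notion of a shared point legitimately forces coplanarity even at infinity. To make the reduction airtight I would also dispatch the degenerate configurations, where $\boldsymbol{Q} = \boldsymbol{P}$ or $\boldsymbol{Q}$ otherwise lies on $\boldsymbol{L}$: in these situations $\boldsymbol{P} \vee \boldsymbol{Q}$ either vanishes or is proportional to $\boldsymbol{L}$ itself, so the wedge is trivially zero. Once these checks are in place, the corollary follows in essentially one line from the theorem.
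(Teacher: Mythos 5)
Your proposal is correct and follows essentially the same route as the paper: the paper's proof is the one-line observation that the join $\boldsymbol{P}\vee\boldsymbol{Q}$ intersects $\boldsymbol{L}$ at $\boldsymbol{P}$, so the two lines are coplanar and Theorem~\ref{thm: lines on plane} applies. Your additional checks (simplicity of the join, the infinite-point case, and the degenerate configurations) are sensible bookkeeping that the paper leaves implicit, but they do not change the argument.
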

\begin{proof}
	The line $\boldsymbol{P} \vee \boldsymbol{Q}$ joining two points intersect to $\boldsymbol{L}$ at $\boldsymbol{P}$, which means they lie in the same plane.
\end{proof}

\begin{corollary}
	\label{coro: infinite lines}
	Given any infinite line $\boldsymbol{L}$ and any infinite points $\boldsymbol{P}$, the equation \eqref{eq: points on line} holds for any infinite points $\boldsymbol{Q}$.
\end{corollary}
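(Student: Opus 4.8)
The plan is to reduce the statement to Theorem~\ref{thm: lines on plane} by exhibiting both $\boldsymbol{L}$ and the join $\boldsymbol{P}\vee\boldsymbol{Q}$ as simple bivectors lying in one common plane, namely the plane at infinity. First I would recall the coordinate forms of the ingredients: an infinite line has Pl\"ucker coordinates $[0,0,0,p_x,p_y,p_z]^T$, so $\boldsymbol{L}=p_x\boldsymbol{e}_{01}+p_y\boldsymbol{e}_{02}+p_z\boldsymbol{e}_{03}$, and an infinite point is a trivector whose $\boldsymbol{e}_{123}$ component vanishes, i.e. its homogeneous weight $w$ is zero.

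The key preliminary step is to evaluate $\boldsymbol{P}\vee\boldsymbol{Q}$ for two infinite points using the explicit vee-product formula stated earlier in the excerpt. Writing the two points as $[\underline{p}^T,w_1]^T$ and $[\underline{q}^T,w_2]^T$, the infinite condition forces $w_1=w_2=0$, so the direction block $w_1\underline{q}-w_2\underline{p}$ vanishes identically and only the cross-product block survives:
\begin{equation}
	\boldsymbol{P}\vee\boldsymbol{Q}=(\underline{p}\times\underline{q})_x\boldsymbol{e}_{01}+(\underline{p}\times\underline{q})_y\boldsymbol{e}_{02}+(\underline{p}\times\underline{q})_z\boldsymbol{e}_{03}
\end{equation}
Since it is the join of two points it is automatically a simple bivector, and since its direction part $[d_x,d_y,d_z]$ is zero it is again an infinite line.

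The main step is then immediate and geometric. Both $\boldsymbol{L}$ and $\boldsymbol{P}\vee\boldsymbol{Q}$ have the form $a\boldsymbol{e}_{01}+b\boldsymbol{e}_{02}+c\boldsymbol{e}_{03}=\boldsymbol{e}_0\wedge(a\boldsymbol{e}_1+b\boldsymbol{e}_2+c\boldsymbol{e}_3)$, which displays each of them as a line contained in the single plane $\boldsymbol{e}_0$, the plane at infinity. Applying Theorem~\ref{thm: lines on plane} to these two coplanar simple bivectors yields $\boldsymbol{L}\wedge(\boldsymbol{P}\vee\boldsymbol{Q})=\boldsymbol{0}$, which is exactly \eqref{eq: points on line}.

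I do not anticipate a genuine obstacle; the only point needing care is confirming that $\boldsymbol{P}\vee\boldsymbol{Q}$ is simple and lives at infinity, which the vee-product computation settles. As an even shorter route that bypasses Theorem~\ref{thm: lines on plane}, one may expand the wedge directly: every resulting term is a multiple of some $\boldsymbol{e}_{0i}\wedge\boldsymbol{e}_{0j}$, and each such product contains the factor $\boldsymbol{e}_0$ twice and hence vanishes by $\boldsymbol{e}_0\wedge\boldsymbol{e}_0=0$. Either argument makes clear that this corollary is the ``everything at infinity'' specialization of the preceding corollary: the coplanarity needed for the wedge to vanish is now furnished automatically by the plane at infinity, rather than by the hypothesis that $\boldsymbol{P}$ lie on $\boldsymbol{L}$.
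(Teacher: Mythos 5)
Your proof is correct and follows essentially the same route as the paper's: both arguments observe that $\boldsymbol{L}$ and the join $\boldsymbol{P}\vee\boldsymbol{Q}$ of two infinite points are lines lying in the plane at infinity $\boldsymbol{e}_0$, and then invoke Theorem~\ref{thm: lines on plane}. Your explicit vee-product computation and the direct $\boldsymbol{e}_{0i}\wedge\boldsymbol{e}_{0j}=0$ expansion merely make concrete the facts the paper asserts without calculation.
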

\begin{proof}
	All the infinite lines and points lie on the infinite plane $\boldsymbol{e}_0$. According to Theorem \ref{thm: lines on plane} and the fact that the joining line of two infinite points is an infinite line, the equation \eqref{eq: points on line} holds.
\end{proof}

Because points $\boldsymbol{P}_{\alpha}$ is located on the line $\boldsymbol{L}_{k}$, the term $\boldsymbol{L}_{k}\wedge (\boldsymbol{P}_{\alpha} \vee \ddot{\boldsymbol{P}}_{\alpha})$ holds to be zero. Besides, since the point is rigidly attached to both rigid bodies, its coordinates w.r.t. the both bodies remains constant. Then, by substituting \eqref{eq: shared coordinates} into \eqref{eq: coeff dependency}, we got the following linear dependency of dynamics regressor coefficients.
\begin{equation}
	\begin{split}
		&\sum_{t_r=1}^{4}\sum_{t_c=1}^{4}c_{\alpha,t_r}^{i} c_{\alpha,t_c}^{i} \left[\boldsymbol{L}_{j}^{i} \wedge ( \boldsymbol{E}^i_{t_r} \vee  \ddot{\boldsymbol{E}}^i_{t_c})\right] \\
		=& \sum_{t_r=1}^{4}\sum_{t_c=1}^{4}c_{\alpha,t_r}^{\lambda(i)} c_{\alpha,t_c}^{\lambda(i)} \left[\boldsymbol{L}_{j}^{\lambda(i)} \wedge ( \boldsymbol{E}^{\lambda(i)}_{t_r} \vee  \ddot{\boldsymbol{E}}^{\lambda(i)}_{t_c})\right]
	\end{split}
\end{equation}
Considering the definition of dynamics regressor coefficients, the equation is identical to
\begin{equation}
	\label{eq: group rule 1}
	\begin{split}
		&\sum_{t_r, t_c} c_{\alpha,t_r}^{i} c_{\alpha,t_c}^{i} R_{j,i,t_r,t_c} \\
		= &\sum_{t_r, t_c} c_{\alpha,t_r}^{\lambda(i)} c_{\alpha,t_c}^{\lambda(i)} R_{j,\lambda(i),t_r,t_c}
	\end{split}
\end{equation}
Then, it can be rearranged into tensor form as
\begin{equation}
	\label{eq: null space}
	\underline{R}_{j,i} * (\underline{c}_{\alpha}^{i} {\underline{c}_{\alpha}^{i}}^T) - \underline{R}_{j,\lambda(i)} * (\underline{c}_{\alpha}^{\lambda(i)} {\underline{c}_{\alpha}^{\lambda(i)}}^T) = 0
\end{equation}
Since the equation holds regardless of the value of $j$, it provides a basis of $Null(Y)$.

Nevertheless, more than one point is shared by the two rigid bodies in R and P joint. For two different shared points $\boldsymbol{P}_{\alpha_1}$ and $\boldsymbol{P}_{\alpha_2}$, the following equations can be obtained according to the above procedure:
\begin{subequations}
	\label{eq: null space cross}
	\begin{align}
		\underline{R}_{j,i} * (\underline{c}_{\alpha_1}^{i} {\underline{c}_{\alpha_2}^{i}}^T) - \underline{R}_{j,\lambda(i)} * (\underline{c}_{\alpha_1}^{\lambda(i)} {\underline{c}_{\alpha_2}^{\lambda(i)}}^T) = 0\\
		\underline{R}_{j,i} * (\underline{c}_{\alpha_2}^{i} {\underline{c}_{\alpha_1}^{i}}^T) - \underline{R}_{j,\lambda(i)} * (\underline{c}_{\alpha_2}^{\lambda(i)} {\underline{c}_{\alpha_1}^{\lambda(i)}}^T) = 0
	\end{align}
\end{subequations}
However, it is noted that $\underline{c}_{\alpha_1}^{i} {\underline{c}_{\alpha_2}^{i}}^T$ is not symmetric. In order to obtain basis in $SDP(4)$, \eqref{eq: null space cross} is transformed to the following equation:
\begin{equation}
	\begin{split}
		&\underline{R}_{j,i} * (\underline{c}_{\alpha_1}^{i} {\underline{c}_{\alpha_2}^{i}}^T + \underline{c}_{\alpha_2}^{i} {\underline{c}_{\alpha_1}^{i}}^T) \\
		- &\underline{R}_{j,\lambda(i)} * (\underline{c}_{\alpha_1}^{\lambda(i)} {\underline{c}_{\alpha_2}^{\lambda(i)}}^T + \underline{c}_{\alpha_2}^{\lambda(i)} {\underline{c}_{\alpha_1}^{\lambda(i)}}^T) = 0
	\end{split}
\end{equation}
Again, the equation holds regardless of the value of $j$. Therefore, it provides only one basis of $Null(Y)$. The shared points of the five different joint types are summarized in Table. \ref{tab: shared points}, and illustrated in Fig. \ref{fig: shared points}a.

\begin{remark}
	Results similar to the shared point principle were proposed in \cite{Ros2015InertiaTransfer}, where the concept of multipoles was applied to generate shared points. In this paper, the proposed shared points principle provides a geometric method, instead of symbolic method, to address the base parameter analysis problem.
\end{remark}

\begin{figure*}[t]
	\centering
	\includegraphics[width=1\linewidth]{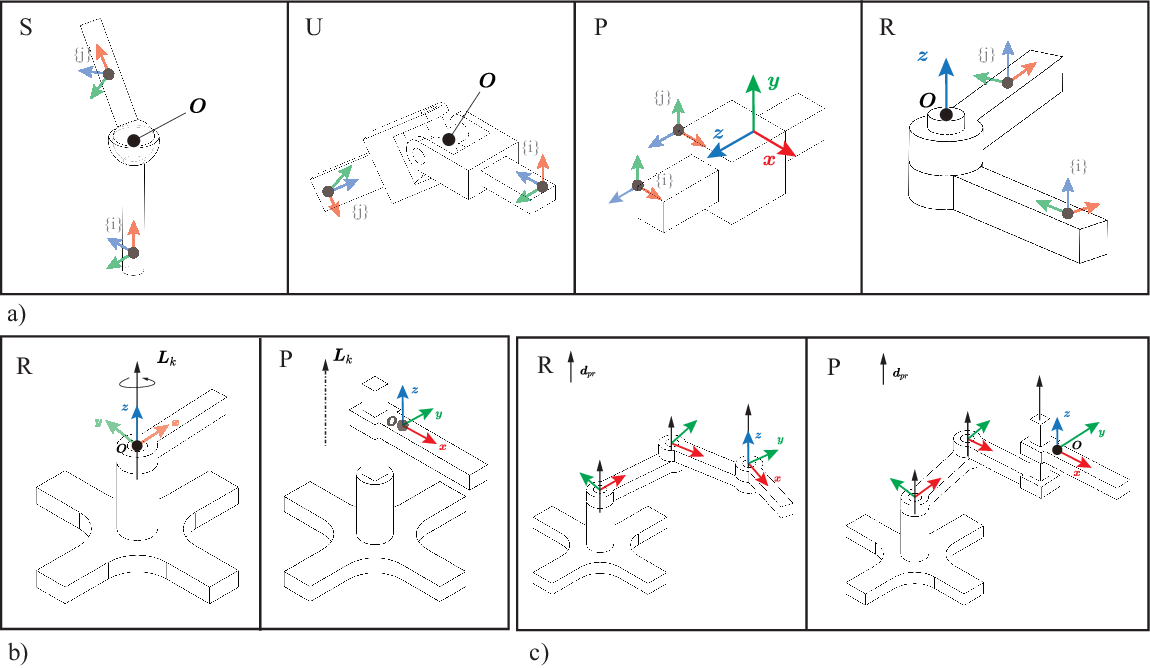}
	\caption{Illustration of the proposed three principles: a) Shared points in S, U, P, and R joints. The body-fixed frame of two rigid bodies are denoted as \{i\} and \{j\}. The shared points are corresponding to Table \ref{tab: shared points}. b). The fixed points in R (left) and P (right) joints. The points $\boldsymbol{O}$ and $\boldsymbol{z}$ on the R joint are fixed to the base. The directions of $\boldsymbol{x}, \boldsymbol{y}, \boldsymbol{z}$ on the P joint remains static, which means that they are fixed to the base in the perspective of PGA. c). Two typical cases when planar rotations happen. Left: multiple rigid bodies are connected through R joints with parallel rotational axis. Right: a rigid body is connected to a planar-rotation body through a P joint, and the translational direction coincide with the rotational axis.}
	\label{fig: shared points}
\end{figure*}

\begin{table}[h]
	\centering
	\begin{tabular}{ccc}
		\toprule
		Type  & Shared points & Description\\ \noalign{\smallskip}
		\midrule
		F  & $[\ ]$ & No shared points \\ \noalign{\smallskip}
		S  & $[\boldsymbol{O}]$ & The center of rotation\\ \noalign{\smallskip}
		U  & $[\boldsymbol{O}]$ & The intersection point of two axis\\ \noalign{\smallskip}
		P  & $[\boldsymbol{x}, \boldsymbol{y}, \boldsymbol{z}]$ & $\boldsymbol{z}$ is the translational direction\\ \noalign{\smallskip}
		R  & $[\boldsymbol{O}, \boldsymbol{z}]$ & Both on the rotational axis\\ \noalign{\smallskip}
		\bottomrule
	\end{tabular}
	\caption{The shared points of different types of joints.}
	\label{tab: shared points}
\end{table}

\subsection{Principle 2: fixed points}
For robotic manipulators attached to a fixed base, there exists shared points attached to the fixed base. We found that it is the main cause of the unidentifiability of inertial parameters, and it introduces extra linear dependency.

Suppose a rigid body $i$ is constrained to the fixed base by a joint with type R, P, U, or S. It means that the rigid body \{j\} illustrated in Fig.\ref{fig: shared points}a is fixed. Then, at least one shared point $\boldsymbol{P}_{\alpha}$ is attached to the fixed base. It means that 
\begin{equation}
	\label{eq: P acceleration 0}
	\ddot{\boldsymbol{P}}_{\alpha}\equiv \boldsymbol{0}
\end{equation}
According to \eqref{eq: jacobian line}, the Jacobian line of body $i$ is then
\begin{equation}
	\boldsymbol{L}^i_j = \boldsymbol{L}_{k} \frac{\partial q_{k}}{\partial q_j}
\end{equation}
Then, it is easy to prove that the following equations are satisfied.
\begin{align}
	\label{eq: zero accelerate}\boldsymbol{L}_j^i \wedge (\boldsymbol{P} \vee \ddot{\boldsymbol{P}}_{\alpha}) =0\\
	\boldsymbol{L}_j^i \wedge (\boldsymbol{P}_{\alpha} \vee \ddot{\boldsymbol{P}}) =0
\end{align}
where $\boldsymbol{P}$ is any points in the projective space, i.e. an arbitrary trivector. The first equation is caused by \eqref{eq: P acceleration 0}, and the second equation is caused by the fact that $\boldsymbol{P}_{\alpha}$ is located on $\boldsymbol{L}_{k}$ or $\boldsymbol{P}_{\alpha}$ and $\boldsymbol{L}_j^i$ are both infinite elements.
As a result, the following tensor-form equations are satisfied.
\begin{align}
	\underline{R}_{j,i} * (\underline{c}^{i} {\underline{c}_{\alpha}^{i}}^T) = 0\\
	\underline{R}_{j,i} * (\underline{c}_{\alpha}^{i} {\underline{c}^{i}}^T) = 0
\end{align}
where $\underline{c}^{i}$ and $\underline{c}_{\alpha}^{i}$ are coordinates of points $\boldsymbol{P}$ and $\boldsymbol{P}_{\alpha}$, respectively. It introduce more linear dependency apart from the ones caused by shared points.

However, the gravity introduces an extra acceleration $\boldsymbol{G}$ to all the rigid bodies. As a result, all the points attached to rigid bodies obtain an extra acceleration, which is calculated by
\begin{equation}
	\ddot{\boldsymbol{P}}^{extra} = \boldsymbol{P} \times \boldsymbol{G}
\end{equation}
It should be noted that $\boldsymbol{G}$ can be formulated as follows once the basis bivectors are chosen.
\begin{equation}
	\boldsymbol{G} = G_{x} \boldsymbol{e}_{01} + G_{y} \boldsymbol{e}_{02} + G_{z} \boldsymbol{e}_{03}
\end{equation}
Then, for all infinite points, $\ddot{\boldsymbol{P}}^{extra}$ is $\boldsymbol{0}$, while for Euclidean points, it is
\begin{equation}
	\ddot{\boldsymbol{P}}^{extra} = G_{x} \boldsymbol{e}_{032} + G_{y} \boldsymbol{e}_{013} + G_{z} \boldsymbol{e}_{021}
\end{equation}
Because of this extra acceleration, \eqref{eq: P acceleration 0} is broken when $\boldsymbol{P}_{\alpha}$ is a Euclidean point. It further affects the linear dependency among inertial parameters.

To be more specific, we analyze the case of S, U, P, and R joints in the following part. The denotation of fixed points are the same as shared points illustrated in Fig.\ref{fig: shared points}a, and their coordinates $\underline{c}_{x}^i, \underline{c}_{y}^i, \underline{c}_{z}^i, \underline{c}_{O}^i$ w.r.t. \{i\} can always be obtained.

\subsubsection{S/U joints}
\

For both S and U joints, only one point $\boldsymbol{P}_{\alpha}=\boldsymbol{O}$ is attached to the fixed base. Then, we can obtained the following equations apart from the shared points equations:
\begin{subequations}
	\label{eq: U/S fixed}
	\begin{align}
		\underline{R}_{j,i} * (\underline{c}_{O}^{i} {\underline{c}_{x}^{i}}^T + \underline{c}_{x}^{i} {\underline{c}_{O}^{i}}^T) = 0\\
		\underline{R}_{j,i} * (\underline{c}_{O}^{i} {\underline{c}_{y}^{i}}^T + \underline{c}_{y}^{i} {\underline{c}_{O}^{i}}^T) = 0\\
		\underline{R}_{j,i} * (\underline{c}_{O}^{i} {\underline{c}_{z}^{i}}^T + \underline{c}_{z}^{i} {\underline{c}_{O}^{i}}^T) = 0
	\end{align}
\end{subequations}
If the gravity $\boldsymbol{G}\neq \boldsymbol{0}$, then $\ddot{\boldsymbol{O}}\neq \boldsymbol{0}$. It breaks \eqref{eq: zero accelerate}, and makes \eqref{eq: U/S fixed} not satisfied any more. In this case, the fixed points will not introduce extra linear dependency of inertial parameters.
\subsubsection{P joints}
\

For P joints, as illustrated in the right figure of Fig.\ref{fig: shared points}b, $\boldsymbol{x}$, $\boldsymbol{y}$ and $\boldsymbol{z}$ are located on the fixed base, which means the direction wouldn't change. Following the same procedure, the following extra equations are satisfied
\begin{subequations}
	\label{eq: P fixed}
	\begin{align}
		\underline{R}_{j,i} * (\underline{c}_{O}^{i} {\underline{c}_{x}^{i}}^T + \underline{c}_{x}^{i} {\underline{c}_{O}^{i}}^T) = 0\\
		\underline{R}_{j,i} * (\underline{c}_{O}^{i} {\underline{c}_{y}^{i}}^T + \underline{c}_{y}^{i} {\underline{c}_{O}^{i}}^T) = 0\\
		\underline{R}_{j,i} * (\underline{c}_{O}^{i} {\underline{c}_{z}^{i}}^T + \underline{c}_{z}^{i} {\underline{c}_{O}^{i}}^T) = 0
	\end{align}
\end{subequations}
Since $\boldsymbol{x}$, $\boldsymbol{y}$ and $\boldsymbol{z}$ are all infinite points, \eqref{eq: P fixed} always holds even if $\boldsymbol{G}\neq \boldsymbol{0}$. 

\subsubsection{R joints}
\

For R joints, as illustrated in the left figure of Fig.\ref{fig: shared points}b, $\boldsymbol{O}$ and $\boldsymbol{z}$ are located on the fixed base. Accordingly, the following extra equations are satisfied
\begin{subequations}
	\label{eq: R fixed}
	\begin{align}
		\label{eq: Ox} \underline{R}_{j,i} * (\underline{c}_{O}^{i} {\underline{c}_{x}^{i}}^T + \underline{c}_{x}^{i} {\underline{c}_{O}^{i}}^T) = 0\\
		\label{eq: Oy} \underline{R}_{j,i} * (\underline{c}_{O}^{i} {\underline{c}_{y}^{i}}^T + \underline{c}_{y}^{i} {\underline{c}_{O}^{i}}^T) = 0\\
		\label{eq: zx}\underline{R}_{j,i} * (\underline{c}_{z}^{i} {\underline{c}_{x}^{i}}^T + \underline{c}_{x}^{i} {\underline{c}_{z}^{i}}^T) = 0\\
		\label{eq: zy}\underline{R}_{j,i} * (\underline{c}_{z}^{i} {\underline{c}_{y}^{i}}^T + \underline{c}_{y}^{i} {\underline{c}_{z}^{i}}^T) = 0
	\end{align}
\end{subequations}
When the gravity $\boldsymbol{G}\neq \boldsymbol{0}$, there are tow possible cases. If the direction of the gravity coincides with the direction of the rotational axis, i.e. $\boldsymbol{L}_k \times \boldsymbol{G} = \boldsymbol{0}$, the resultant acceleration $\ddot{\boldsymbol{O}}$ is located on the line $\boldsymbol{L}_k$. It means that \eqref{eq: zero accelerate} still holds, and so is \eqref{eq: R fixed}. Otherwise, \eqref{eq: zero accelerate} is broken, causing \eqref{eq: Ox} and \eqref{eq: Oy} not satisfied any more.
\begin{remark}
	In the cases when $\boldsymbol{G} = \boldsymbol{0}$, fixed points existing on the rigid body not directly connecting to the fixed base would cause extra unidentifiable parameters. For example, the fixed points on the second link of Puma560 would cause 2 extra unidentifiable parameters. We leave this case to the future work, and only the cases when $\boldsymbol{G}\neq \boldsymbol{0}$ is considered in this paper.
\end{remark}
\subsection{Principle 3: planar rotations}
With the above principles, the main part of the basis of $Null(Y)$ can be found. Nevertheless, it is common in robotic system when rigid bodies are constrained such that they can only rotate in a plane. For example, the two cases illustrated in Fig.\ref{fig: shared points}c. In this subsection, we explain that this constraint introduces extra basis vectors of of $Null(Y)$.

Because the U, S and F joints all cause spatial rotations, the planar rotations can only generated by a successive R and P joints from the fixed-base to the constrained bodies. It leaves two cases: the body is constrained to its parent through R or P joint. Suppose the direction of the planar rotational axis are recorded and denoted as an infinite point $\boldsymbol{d}_{pr}$. Then, for both cases, this infinite point keeps static, that is
\begin{equation}
	\label{eq: z=0} \ddot{\boldsymbol{d}}_{pr} \equiv \boldsymbol{0}
\end{equation}

The case when the body is connected to its parent through R joint is first considered. We suppose the tetrahedral points of the rigid body constrained to planar rotations are chosen such that $\boldsymbol{O}$ and $\boldsymbol{z}$ are located on the rotational axis, and $\{\boldsymbol{x},\boldsymbol{y}, \boldsymbol{z}\}$ are orthogonal to each other (the left figure of Fig.\ref{fig: shared points}c. Then, there is $\boldsymbol{z}\equiv \boldsymbol{d}_{pr}$. The planar rotations constraint means that the rigid body can only rotate in the direction of $\boldsymbol{z}$, but it may translate in any direction. The acceleration $\ddot{\boldsymbol{O}}$ can be any value. Then, the configuration of $\boldsymbol{x}, \boldsymbol{y}$ can always be formulated as
\begin{align}
	\label{eq: x pos}\boldsymbol{x} &= C_{\theta} \boldsymbol{x}^0 + S_{\theta} \boldsymbol{y}^0\\
	\label{eq: y pos}\boldsymbol{y} &= -S_{\theta} \boldsymbol{x}^0 + C_{\theta} \boldsymbol{y}^0
\end{align}
where $C_{\theta}=\cos{\theta}, S_{\theta} = \sin{\theta}$. $\boldsymbol{x}^0, \boldsymbol{y}^0$ are the initial position of $\boldsymbol{x}, \boldsymbol{y}$ when the rotational angle $\theta = 0$.
Accordingly, the accelerations can be calculated:
\begin{align}
	\label{eq: x acc}\ddot{\boldsymbol{x}} = -\theta^2 \boldsymbol{x} + \ddot{\theta} \boldsymbol{y}\\
	\label{eq: y acc}\ddot{\boldsymbol{y}} = -\ddot{\theta} \boldsymbol{x} - \dot{\theta}^2 \boldsymbol{y}
\end{align}
By simple computation, it is found that
\begin{subequations}
	\label{eq: planar rotation}
	\begin{align}
		\boldsymbol{y} \vee \ddot{\boldsymbol{y}} - \boldsymbol{x} \vee \ddot{\boldsymbol{x}} =& \boldsymbol{0}\\
		\boldsymbol{x} \vee \ddot{\boldsymbol{y}} + \boldsymbol{y} \vee \ddot{\boldsymbol{x}} =& \boldsymbol{0}
	\end{align}    
\end{subequations}

Furthermore, considering \eqref{eq: z=0}, we got
\begin{equation}
	\label{eq: d zero}
	\boldsymbol{L}_j^i \wedge (\boldsymbol{P} \vee \ddot{\boldsymbol{z}}) = \boldsymbol{L}_j^i \wedge (\boldsymbol{P} \vee \ddot{\boldsymbol{d}}_{pr}) =0
\end{equation}
where $\boldsymbol{P}$ can be any point.
Considering the velocity of the rigid body only contains rotational velocity along $\boldsymbol{z}$, we can conclude that
\begin{equation}
	\label{eq: d on the line}
	\boldsymbol{L}_j^i \wedge (\boldsymbol{z}\vee \ddot{\boldsymbol{P}}) = \boldsymbol{L}_j^i \wedge (\boldsymbol{d}_{pr} \vee \ddot{\boldsymbol{P}} )=0
\end{equation}

As a result, the planar rotation introduces four basis of $Null(Y)$:
\begin{subequations}
	\label{eq: coordinates pr}
	\begin{align}
		\label{eq: planar rotation xx yy}\underline{R}_{j,i} * (\underline{c}_{x}^{i} {\underline{c}_{x}^{i}}^T - \underline{c}_{y}^{i} {\underline{c}_{y}^{i}}^T) = 0\\
		\label{eq: planar rotation xy}\underline{R}_{j,i} * (\underline{c}_{x}^{i} {\underline{c}_{y}^{i}}^T + \underline{c}_{y}^{i} {\underline{c}_{x}^{i}}^T) = 0\\
		\label{eq: planar rotation zx} \underline{R}_{j,i} * (\underline{c}_{z}^{i} {\underline{c}_{x}^{i}}^T + \underline{c}_{x}^{i} {\underline{c}_{z}^{i}}^T) = 0\\
		\label{eq: planar rotation zy} \underline{R}_{j,i} * (\underline{c}_{z}^{i} {\underline{c}_{y}^{i}}^T + \underline{c}_{y}^{i} {\underline{c}_{z}^{i}}^T) = 0
	\end{align}
\end{subequations}
It should be noted that \eqref{eq: planar rotation zx} is identical to \eqref{eq: zx}, and \eqref{eq: planar rotation zy} is identical to \eqref{eq: zy}. Therefore, if the rigid body is attached to the fixed base through an R joint, \eqref{eq: planar rotation zx} and \eqref{eq: planar rotation zy} should be omitted. A typical case is illustrated in the left figure of Fig.\ref{fig: shared points}c.

As for the case when the body is connected to its parent through a P joint, the relations in \eqref{eq: coordinates pr} are not independent considering the shared points principle. However, if the parent body is not the fixed base, extra basis vectors are still generated. Suppose $\boldsymbol{O}$ is the Euclidean point in the tetrahedral points. According to \eqref{eq: d zero} and \eqref{eq: d on the line}, the following equation holds
\begin{equation}
	\label{eq: planar rotation Oz} \underline{R}_{j,i} * (\underline{c}_{d_{pr}}^{i} {\underline{c}_{O}^{i}}^T + \underline{c}_{O}^{i} {\underline{c}_{d_{pr}}^{i}}^T) = 0
\end{equation}
A special case is when the translational direction is along the rotational axis, which is $\boldsymbol{z}\equiv \boldsymbol{d}_{pr}$ so that $\boldsymbol{z} \vee \boldsymbol{d}_{pr}=\boldsymbol{0}$. It is illustrated in the right figure of Fig.\ref{fig: shared points}c. The third body is connected to the second body, which is constrained to planar rotations, through a P joint. The translational direction coincides with the planar rotational axis. In this case, the $\boldsymbol{L}_k$ in \eqref{eq: jacobian line} is an infinite line, which means 
\begin{align}
	\boldsymbol{L}_k \wedge (\boldsymbol{x} \vee \ddot{\boldsymbol{O}}) &= 0\\
	\boldsymbol{L}_k \wedge (\boldsymbol{y} \vee \ddot{\boldsymbol{O}}) &= 0
\end{align}
Besides, $\boldsymbol{L}_k$ is along the direction of $\boldsymbol{z}$. Considering \eqref{eq: x acc} and \eqref{eq: y acc}, we got the following equations:
\begin{align}
	\boldsymbol{L}_k \wedge (\boldsymbol{O} \vee \ddot{\boldsymbol{x}}) &= 0\\
	\boldsymbol{L}_k \wedge (\boldsymbol{O} \vee \ddot{\boldsymbol{y}}) &= 0
\end{align}
It results in the transfer of the inertia to the parent body as formulated in \eqref{eq: coeff dependency}. Therefore, two new linear dependencies are generated in the similar form of shared points:
\begin{equation}
	\begin{split}
		\label{eq: Ox planar rotations}
		&\underline{R}_{j,i} * (\underline{c}_{O}^{i} {\underline{c}_{x}^{i}}^T + \underline{c}_{x}^{i} {\underline{c}_{O}^{i}}^T) \\
		- &\underline{R}_{j,\lambda(i)} * (\underline{c}_{O}^{\lambda(i)} {\underline{c}_{x}^{\lambda(i)}}^T + \underline{c}_{x}^{\lambda(i)} {\underline{c}_{O}^{\lambda(i)}}^T) = 0
	\end{split}
\end{equation}
\begin{equation}
	\begin{split}
		\label{eq: Oy planar rotations}
		&\underline{R}_{j,i} * (\underline{c}_{O}^{i} {\underline{c}_{y}^{i}}^T + \underline{c}_{y}^{i} {\underline{c}_{O}^{i}}^T) \\
		- &\underline{R}_{j,\lambda(i)} * (\underline{c}_{O}^{\lambda(i)} {\underline{c}_{y}^{\lambda(i)}}^T + \underline{c}_{y}^{\lambda(i)} {\underline{c}_{O}^{\lambda(i)}}^T) = 0
	\end{split}
\end{equation}

With the above three principles, all the basis vectors of $Null(Y)$ can be determined, and the basis of base parameter space $B(Y)$ can be further generated according to \eqref{eq: base space}.
\section{Algorithm implementations}
\label{sec: Algorithm}
In this section, we introduce an iterative algorithm to analyze the base parameters of a given robotic system. The complexity of the algorithm is at worst $O(N_B+N_L)$ and at best $O(1)$. The main problem is to figure out the coordinates of joint points in body-fixed frames. Therefore, we introduce the determination of joint points and the assignment of the body tetrahedral points, successively.
\subsection{Representation of robots}
\begin{figure}[t]
	\centering
	\includegraphics[width=0.9\linewidth]{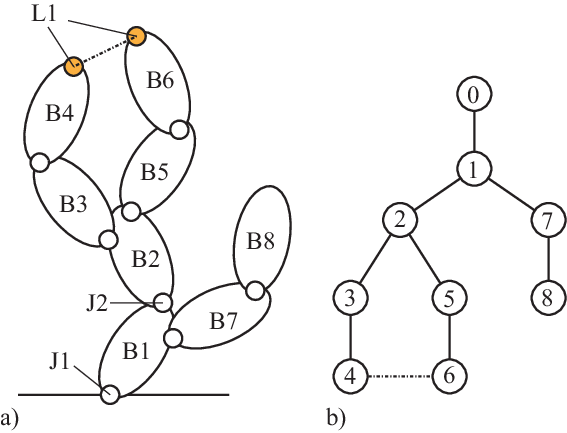}
	\caption{Illustration of the URDF+ convention (\cite{Chignoli2024urdfenhancedurdfrobots}). a). The illustration of a tree-structure robot with a closed loop. The closed loop is cut off and a set of constraints is introduced. b). The graphic topology of the robot. Each vertex is a rigid body, and each edge is a joint. The loop joints are illustrated with dashed edges. The indices are assigned according to the depth-first search convention.}
	\label{fig: robot topology}
\end{figure}
We apply the convention of the URDF+ (\cite{Chignoli2024urdfenhancedurdfrobots}) to represent a robot. For robots without closed loops, such as serial robot, they are represented as kinematic trees. Every rigid body in the robot has only one parent. Its body-fixed frame is defined in the joint connecting it to its parent. Therefore, the joint points shared by the parent and the child always lie in the origin or the x/y/z axis of the frame. The convention of the shared points settings are summarized in Table. \ref{tab: shared points}. The number of joints is identical to the number of rigid bodies $N_B$, and these joints are called tree joints. The configuration of the robot can be determined by tree joint variables, denoted as $\underline{q}$.  

However, if the robot contains closed loops, one joint in each loop should be cut to transform the topology of the robot to a tree structure. The tree joint variables are still denoted as $\underline{q}$. The cut joints are denoted as loop joints. Additional frames are required to be assigned at loop joints, and they should follow the same convention in Table \ref{tab: shared points}. The kinematics constraints are constructed in the form of algebraic equations:
\begin{equation}
	C(\underline{q})=\underline{0}
\end{equation}
where the coordinates $\underline{q}$ contain the actuated coordinates $\underline{q}_a$ and the passive coordinates $\underline{q}_p$. The velocity-level constraints are
\begin{equation}
	\label{eq: jacobian}
	\underline{J}_a \underline{q}_a + \underline{J}_p \underline{q}_p = \underline{0}
\end{equation}
The rank-deficiency of $\underline{J}_p$ can cause singularity, and its condition number can provide a measure of the ``distance" to singularities.

Consequently, the robot with closed loops is depicted as a constrained tree-structured robot. In this case, the number of joints is $N_B+N_L$, where $N_L$ is the number of loop joints. The robot's configuration is determined by the independent joints, which is usually the actuated tree joints in practice. We denote it as $\underline{q}_a$, and the rest of tree joint variables as $\underline{q}_p$. In the case of tree-structured robots, $\underline{q}_a=\underline{q}$. An example is illustrated in Fig.\ref{fig: robot topology}a.

The indices of the rigid bodies are set from the fixed base to the leaves of the kinematic tree according to the depth-first search convention. The indices of the joints in the kinematic tree are identical to the child bodies. The indices of the loop joints are set by users. It should be noted that the parent index of a loop joint should be smaller than the child index. An example is illustrated in Fig.\ref{fig: robot topology}b.

With the above convention, all the frames are assigned on joints. They are called joint frames. With proper kinematics algorithms, one can calculate the initial configuration of all joint frames when $\underline{q}_a = \underline{q}_0$. In PGA, we use motors to represent the rigid body configuration instead of homogeneous matrices. In the implementations, a robot is represented as a struct variable ``rbt" containing the attributes summarized in Table \ref{tab: robot representation}. It should be noted that only the attributes involved in base parameter analysis are included in this table.
\begin{table}[h]
	\centering
	\begin{tabular}{ccc}
		\toprule
		Name  & Meaning & Size \\ \noalign{\smallskip}
		\midrule
		n  & Number of rigid bodies & 1\\ \noalign{\smallskip}
		type  & Types of tree joints & [n,1] \\ \noalign{\smallskip}
		parent  & Indices of parent body & [n,1]\\ \noalign{\smallskip}
		nl  & Number of loops & 1\\ \noalign{\smallskip}
		ltype  & Types of loop joints & [nl,1]\\ \noalign{\smallskip}
		lparent  & Ids of parents of loop joints & [nl,1]\\ \noalign{\smallskip}
		lchild  & Ids of children of loop joints & [nl,1]\\ \noalign{\smallskip}
		q\_0  & Initial configuration $\underline{q}_0$  & [dof\_a, 1]\\ \noalign{\smallskip}
		M0  & Initial config of rigid bodies  & [8,n]\\ \noalign{\smallskip}
		Ml  & Initial config of loop joints  & [8,nl]\\ \noalign{\smallskip}
		L0  & 1-DOF joints' axis & [6,n]\\ \noalign{\smallskip}
		is\_pr  & Indicators for planar rotations  & [n,1]\\ \noalign{\smallskip}
		d\_pr  & Directions of planar rotational axis  & [3,n]\\ \noalign{\smallskip}
		g  & the gravity  & [6,1]\\ \noalign{\smallskip}
		\bottomrule
	\end{tabular}
	\caption{The attributes in the struct variable rbt.}
	\label{tab: robot representation}
\end{table}
\subsection{Spatial assignment of tetrahedral points}
\begin{figure}[t]
	\centering
	\includegraphics[width=0.9\linewidth]{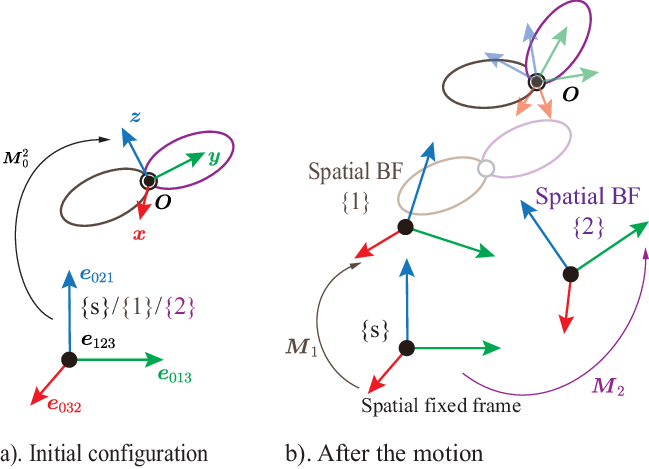}
	\caption{Illustration of the spatial assignment of tetrahedral points. a). The initial configuration when the tetrahedral points of body 1 and body 2 are coincide with the spatial fixed frame. b). After the rigid body motion, the spatial body-fixed frames (BFs) of the two rigid bodies are separated, but the coordinates of point $\boldsymbol{O}$ keeps the same in the two BFs.}
	\label{fig: spatial assignment}
\end{figure}
Each tree joint frame introduced above is attached to a rigid body, and is a direct choice as the tetrahedral points for dynamics modeling. However, with this assignment, extra attributes have to be included in the robot model to record the coordinates of shared points w.r.t. the parent body and the child body. For example, in the case of R joint, shared points include $\boldsymbol{O}$ and $\boldsymbol{z}$ of the joint frame, and the coordinates $\underline{c}_O^i, \underline{c}_O^{\lambda(i)}, \underline{c}_z^i, \underline{c}_z^{\lambda(i)},$ should be all calculated and record in the robot model. It takes more memory to save data and extra effort to calculate all the coordinates.

In order to address this issue, we apply the ``spatial" assignment of tetrahedral points in our algorithm. First, all the rigid bodies are set to their initial configurations represented as $\boldsymbol{M}_0^i$. $\boldsymbol{M}_0^i$ is the rigid body motion from the spatial fixed frame $\{s\}$ to the body fixed frame $\{i\}$ at the initial configuration. Then, the frame coincides with the spatial fixed frame at initial configuration, instead of the joint frame, is assigned as the body-fixed frame of each rigid body. For example, in Fig.\ref{fig: spatial assignment}a, rigid body 1 and 2 are connected through an S joint. The joint frame is illustrated as $\{\boldsymbol{O}, \boldsymbol{x}, \boldsymbol{y}, \boldsymbol{z}\}$, and the spatial fixed frame is illustrated as $\{\boldsymbol{e}_{123}, \boldsymbol{e}_{032}, \boldsymbol{e}_{013}, \boldsymbol{e}_{021}\}$. At the initial configuration, the body fixed frames $\{1\}$ and $\{2\}$ according to the spatial assignment are both coincide with $\{s\}$. In a general case, the body-fixed frames of all rigid bodies coincide at the initial configuration, and the positions of the corresponding tetrahedral points are the same: 
\begin{subequations}
	\begin{align}
		\boldsymbol{x}_{s0}^i = \boldsymbol{e}_{032}\\
		\boldsymbol{y}_{s0}^i = \boldsymbol{e}_{013}\\
		\boldsymbol{z}_{s0}^i = \boldsymbol{e}_{021}\\
		\boldsymbol{O}_{s0}^i = \boldsymbol{e}_{123}
	\end{align}
\end{subequations}
Suppose the tetrahedral points corresponding to the joint frame are denoted as $\{\boldsymbol{x}^i_{b}, \boldsymbol{y}^i_{b}, \boldsymbol{z}^i_{b}, \boldsymbol{O}^i_{b}\}$. They are related to the ones under spatial assignment through the rigid body motion $\boldsymbol{M}^i_0$:
\begin{subequations}
	\label{eq: coordinates shared points}
	\begin{align}
		\boldsymbol{x}_{b0}^i = \widetilde{\boldsymbol{M}}_0^i \boldsymbol{x}_{s0}^i \boldsymbol{M}_0^i = \underline{\boldsymbol{E}}^i_{s} \underline{c}_{x_i}^i\\
		\boldsymbol{y}_{b0}^i = \widetilde{\boldsymbol{M}}_0^i \boldsymbol{y}_{s0}^i \boldsymbol{M}_0^i = \underline{\boldsymbol{E}}^i_{s} \underline{c}_{y_i}^i\\
		\label{eq: czi}\boldsymbol{z}_{b0}^i = \widetilde{\boldsymbol{M}}_0^i \boldsymbol{z}_{s0}^i \boldsymbol{M}_0^i = \underline{\boldsymbol{E}}^i_{s} \underline{c}_{z_i}^i\\
		\label{eq: cOi}\boldsymbol{O}_{b0}^i = \widetilde{\boldsymbol{M}}_0^i \boldsymbol{O}_{s0}^i \boldsymbol{M}_0^i = \underline{\boldsymbol{E}}^i_{s} \underline{c}_{O_i}^i
	\end{align}
\end{subequations}
where $\underline{\boldsymbol{E}}^i_{s}=[\boldsymbol{x}_{s0}^i, \boldsymbol{y}_{s0}^i, \boldsymbol{z}_{s0}^i, \boldsymbol{O}_{s0}^i]$ is a row array, and $\underline{c}_{\square}^i$ is the coordinates w.r.t. $\underline{\boldsymbol{E}}^i_{s}$. As illustrated in Fig.\ref{fig: spatial assignment}a, the points $\{\boldsymbol{O}, \boldsymbol{x}, \boldsymbol{y}, \boldsymbol{z}\}$ can be obtained by moving $\{\boldsymbol{e}_{123}, \boldsymbol{e}_{032}, \boldsymbol{e}_{013}, \boldsymbol{e}_{021}\}$ through $\boldsymbol{M}_0^2$. During the rigid body motion, the tetrahedral points $\underline{\boldsymbol{E}}^i_{s}$ are moving while the coordinates $\underline{c}_{\square}^i$ keep static. Because all the $\underline{\boldsymbol{E}}^i_{s}$ coincide with each other at the initial configuration, the shared points have the same coordinates w.r.t. both the parent and the child body. For example, in Fig.\ref{fig: spatial assignment}b, $\boldsymbol{O}$ are shared points, and its coordinates w.r.t. the parent and the child body are the same:
\begin{equation}
	\underline{c}_{O_i}^i=\underline{c}_{O_i}^{\lambda(i)}
\end{equation}
and they are calculated according to \eqref{eq: cOi}. With the same logic, it can be proved that coordinates of shared points on loop joints can be calculated with $\boldsymbol{M}_l$, and they are the same w.r.t. the parent frame and the child frame. Therefore, no extra attributes are required to be included in the Table \ref{tab: robot representation}, and coordinates of all shared points can be calculated according to \eqref{eq: coordinates shared points}.

Then, we propose the spatial shared points (SSP) algorithm to calculate the coordinates of shared points in parent and child frames as summarized in Algorithm \ref{alg: SSP}. In the algorithm, the function $\operatorname{Coord}()$ returns the coordinates of points as homogeneous coordinates. The floating base is considered connected with the fixed base through a floating (F) joint.
\begin{algorithm}
	\caption{The algorithm SSP()}
	\label{alg: SSP}
	\begin{algorithmic}[1]
		\REQUIRE The initial configuration $\boldsymbol{M}_0$, the joint type $t$
		\STATE $[\underline{x},\underline{y},\underline{z},\underline{O}]=\underline{I}_4$
		\IF{$t$=`R'}
		\STATE $\underline{c}^p=\underline{c}^c=\operatorname{Coord}\left(\widetilde{\boldsymbol{M}}_0[\boldsymbol{z}, \boldsymbol{O}]\boldsymbol{M}_0\right)$ 
		\ELSIF{$t$=`P'}
		\STATE $\underline{c}^p=\underline{c}^c=\operatorname{Coord}\left(\widetilde{\boldsymbol{M}}_0[\boldsymbol{x}, \boldsymbol{y}, \boldsymbol{z}]\boldsymbol{M}_0\right)$ 
		\ELSIF{$t$=`U' \OR `S'}
		\STATE $\underline{c}^p=\underline{c}^c=\operatorname{Coord}\left(\widetilde{\boldsymbol{M}}_0\boldsymbol{O}\boldsymbol{M}_0\right)$ 
		\ELSIF{$t$=`F'}
		\STATE $\underline{c}^p=\underline{c}^c=[\ ]$ 
		\ENDIF
		\RETURN $\underline{c}^p$, $\underline{c}^c$
	\end{algorithmic}
\end{algorithm}

\subsection{Base parameter analysis}
Based on the robot model and the spatial assignment, we propose the algorithm to analyze base parameters of a robot's inertia. Because the three principles are applied to each joint including tree joints and loop joints once the model rbt in Table \ref{tab: robot representation} is obtained, it results in the $O(N_B+N_L)$ complexity with serial iteration and $O(1)$ complexity with parallel iteration. 

Before formally conduct the obervability analysis, the rigid bodies following planar rotations should be identified. We achieve this goal with the following algorithm, which is called the planar rotations indicator (PRI) algorithm. It is an $O(N_B)$-complexity algorithm. In the algorithm, ${\boldsymbol{L}_0}_{[1:3]}$ takes the first three coordinates of the initial lines of 1-DOF joints, which is potential to create planar rotations. For R joints, it indicates the direction of the axis ,and for the other joints it is zero vector. The criteria $\boldsymbol{L}_{d_i} \times \boldsymbol{L}_{d_{\lambda(i)}}=\boldsymbol{0}$ is to determine if the rotational axis is parallel to the one of the parent body. In the cases of P joints, this criteria is practical because $\boldsymbol{L}_{d_i}$ is zero and no rotations are involved. We only consider the cases when planar rotations are caused by tree joints, and leave it in the modeling stage to avoid loop joints causing planar rotations. The output of the algorithm is added as attributes of rbt.
\begin{algorithm}
	\caption{The algorithm PRI()}
	\label{alg:PRI}
	\begin{algorithmic}[1]
		\REQUIRE The number of rigid bodies $n$, indices of parent bodies $\lambda$, types of tree joints $t$, directions of 1-DOF axis $\boldsymbol{L}_d = {\boldsymbol{L}_0}_{[1:3]}$
		\STATE Initialize $is_{pr}$ as a $n\times 1$ zeros array
		\STATE Initialize $d_{pr}$ as a $3\times n$ zeros array
		\FOR{$i=1:n$}
		\IF{$t(i)$=`R' \OR `P'}
		\IF{$\lambda(i)$=0}
		\STATE ${is_{pr}}(i)=1$
		\STATE $d_{pr}(:,i) = {\boldsymbol{L}_d}_i$
		\ELSIF{$is_{pr}(\lambda(i))$=1 \AND $\boldsymbol{L}_{d_i} \times \boldsymbol{L}_{d_{\lambda(i)}}=\boldsymbol{0}$}
		\STATE ${is_{pr}}(i)=1$
		\IF{$\operatorname{norm}(\boldsymbol{L}_{d_{\lambda(i)}})\neq 0$}
		\STATE $d_{pr}(:,i) = \boldsymbol{L}_{d_{\lambda(i)}}$
		\ELSIF{$\operatorname{norm}(d_{pr}(:,p(i)))\neq 0$}
		\STATE $d_{pr}(:,i) = d_{pr}(:,p(i)) $
		\ELSE
		\STATE $d_{pr}(:,i) = \boldsymbol{L}_{d_i} $
		\ENDIF
		\ENDIF
		\ENDIF
		\ENDFOR
		\RETURN $is_{pr}$, $d_{pr}$
	\end{algorithmic}
\end{algorithm}

Furthermore, we noted that 
\begin{equation}
	SDP(4)\in \mathbb{S}(4) \cong \mathbb{R}^{10}
\end{equation}
where $\mathbb{S}(4)$ is the space of 4th-order symmetric matrices. Therefore, we use vectors in $\mathbb{R}^{10}$ to represent inertial parameters, and the following operator is defined to map elements in $\mathbb{S}(4)$ to $\mathbb{R}^{10}$.
\begin{equation}
	\begin{split}
		\widehat{\underline{N}} = &\left[N_{11}, N_{12}, N_{13}, N_{22}, N_{23}, N_{33}, \right.\\
		&\left. N_{14}, N_{24}, N_{34}, N_{44}\right]^T \in \mathbb{R}^{10}
	\end{split}
\end{equation} 
With this operator, we propose the Joint Nullspace Generator (JNG) algorithm to implement Principle 1. Then, the Principle 2 and 3 are implemented according to the equations \eqref{eq: U/S fixed}, \eqref{eq: P fixed}, \eqref{eq: R fixed}, and \eqref{eq: planar rotation}. The pseudo codes of the algorithm, named as the Dynamics Regressor Nullspace Generator (DRNG), are summarized in Algorithm \ref{alg: DRNG}. It is left to the modeling procedure to make sure fixed points and planar rotations only caused by tree joints. From the pseudo codes, it is found that all the iterations are independent. Therefore, it is possible to achieve $O(1)$ complexity with good parallelization. The nullspace is $Null(Y) = \operatorname{span}(B_{null})$, and the base parameter space $B(Y)$ is then generated by \eqref{eq: base space}.
\begin{algorithm}
	\caption{The algorithm JNG()}
	\label{alg: JNSG}
	\begin{algorithmic}[1]
		\REQUIRE The number of rigid bodies $n$, The number of shared points $n_s$, the parent's index $p$, the child's index $c$, coordinates w.r.t the parent frame $\underline{c}^{p}$, coordinates w.r.t the child frame $\underline{c}^{c}$
		\STATE Initialize $j_{null}$ as a $10n \times \frac{n_s (n_s+1)}{2}$ zeros array
		\FOR{$i=1:n_s$}
		\FOR{$j=i:n_s$}
		\STATE $k = \frac{(2n_s-i+2)(i-1)}{2}+j-i+1$
		\IF{$j=i$}
		\IF{$p>0$}
		\STATE $j_{null}(10p-9:10p,k)=\widehat{\underline{c}^p_{j}{\underline{c}^p_{j}}^T}$
		\ENDIF
		\STATE $j_{null}(10c-9:10c,k)=\widehat{\underline{c}^c_{j}{\underline{c}^c_{j}}^T}$
		\ELSE
		\IF{$p>0$}
		\STATE $j_{null}(10p-9:10p,k)=\widehat{\left(\underline{c}^p_{i}{\underline{c}^p_{j}}^T + \underline{c}^p_{j}{\underline{c}^p_{i}}^T\right)}$
		\ENDIF
		\STATE $j_{null}(10c-9:10c,k)=\widehat{\left(\underline{c}^c_{i}{\underline{c}^c_{j}}^T + \underline{c}^c_{j}{\underline{c}^c_{i}}^T\right)}$
		\ENDIF
		\ENDFOR
		\ENDFOR
		\RETURN $j_{null}$
	\end{algorithmic}
\end{algorithm}

\begin{algorithm}
	\caption{The algorithm DRNG()}
	\label{alg: DRNG}
	\begin{algorithmic}[1]
		\REQUIRE The struct variable rbt defined in Table. \ref{tab: robot representation}
		\STATE Parse the variables from rbt: $n$=rbt.n, $nl$ = rbt.nl, $is_{pr}$ = rbt.is\_pr, $\boldsymbol{G}$ = rbt.g, $t$=rbt.type, $t_l$=rbt.lt,  $\lambda$=rbt.parent, $\lambda_l$=rbt.lparent, $\sigma_l$=rbt.lchild, $\boldsymbol{M}_0$=rbt.M0, $\boldsymbol{M}_{l0}$=rbt.Ml
		\STATE $B_{null}=\operatorname{cell}(n+n_l,1)$
		\FOR{$i=1:n$}
		\STATE $b_{null}=[\ ]$
		\STATE \textbf{Principle 1:}
		\STATE $\underline{c}^p, \underline{c}^c=\operatorname{SSP}(\boldsymbol{M}_{0}(i), t(i))$
		\STATE $p=\lambda(i), c=i, n_s=\operatorname{size}(\underline{c}^p,2)$
		\STATE $j_{null}=\operatorname{JNG}(n,n_s,p,c,\underline{c}^p, \underline{c}^c)$
		\STATE $b_{null}=[b_{null}, j_{null}]$
		\IF{$p$=0}
		\STATE \textbf{Principle 2:}
		\IF{$t(i)$=`S' \OR `U'}
		\IF{$\boldsymbol{G}\neq \boldsymbol{O}$}
		\STATE $j_{null}\leftarrow$\eqref{eq: U/S fixed}
		\ENDIF
		\ELSIF{$t(i)$=`P'}
		\STATE $j_{null}\leftarrow$\eqref{eq: P fixed}
		\ELSIF{$t(i)$=`R'}
		\IF{$\boldsymbol{L}_0(i) \times \boldsymbol{G} \neq \boldsymbol{O}$}
		\STATE $j_{null}\leftarrow$\eqref{eq: zx}\eqref{eq: zy}
		\ELSE 
		\STATE $j_{null}\leftarrow$\eqref{eq: R fixed}
		\ENDIF
		\ENDIF
		\STATE $b_{null}=[b_{null}, j_{null}]$
		\ENDIF
		\IF{$is_{pr}(i)$=1}
		\STATE \textbf{Principle 3:}
		\IF{$t(i)$=`R'}
			\IF{$\lambda(i)$=0}
			\STATE $j_{null}\leftarrow$\eqref{eq: planar rotation xx yy} \eqref{eq: planar rotation xy}
			\ELSE 
			\STATE $j_{null}\leftarrow$\eqref{eq: coordinates pr}
			\ENDIF
		\ELSIF{$t(i)$=`P' and $p(i)\neq$0}
			\IF{$\boldsymbol{d}_{pr} \vee \boldsymbol{z} = \boldsymbol{0}$}
				\STATE $j_{null}\leftarrow$\eqref{eq: planar rotation Oz} \eqref{eq: Ox planar rotations} \eqref{eq: Oy planar rotations}
			\ELSE
				\STATE $j_{null}\leftarrow$\eqref{eq: planar rotation Oz}
			\ENDIF
		\ENDIF
		\STATE $b_{null}=[b_{null}, j_{null}]$
		\ENDIF
		\STATE $B_{null}\{i\}=b_{null}$
		\ENDFOR
		\FORALL{$l=1:n_l$}
		\STATE $\underline{c}^p, \underline{c}^c=\operatorname{SSP}(\boldsymbol{M}_{l0}(i), t_l(i))$
		\STATE $p=\lambda_l(i), c=\sigma_l(i), n_s=\operatorname{size}(\underline{c}^p,2)$
		\STATE $j_{null}=\operatorname{JNG}(n,n_s,p,c,\underline{c}^p, \underline{c}^c)$
		\STATE $B_{null}\{i\}=j_{null}$
		\ENDFOR
		\RETURN $B_{null}$
	\end{algorithmic}
\end{algorithm}

\subsection{Regressor computation}
In order to validate the DRNG algorithm, it is necessary to calculate the dynamics regressor. We suppose the configuration of each rigid body $\boldsymbol{M}$, its spatial velocity $\boldsymbol{V}$, and its spatial acceleration $\dot{\boldsymbol{V}}$ are calculated using kinematics algorithms. Then, the dynamics regressor is first computed in tensor form $\underline{R}$ and rearranged into matrix form $\underline{Y}$. The Dynamics Regressor (DR) algorithm is summarized in Algorithm \ref{alg: DR}. In the algorithm, the Jaccobian $\underline{\boldsymbol{J}}_{V}(i) = [\boldsymbol{L}^i_1,...,\boldsymbol{L}^i_{n_a}]$ is an array of lines such that
\begin{equation}
	\boldsymbol{V}(i) = \sum_{k=1}^{n_a}\boldsymbol{L}^i_{k}\dot{q}_{a_k}
\end{equation}
Besides, the dot wedge $.\wedge$ is an elementary operator such that
\begin{equation}
	[\boldsymbol{L}_1,...,\boldsymbol{L}_N] . \wedge \boldsymbol{L} = [\boldsymbol{L}_1\wedge \boldsymbol{L},...,\boldsymbol{L}_N\wedge \boldsymbol{L}]
\end{equation}
It returns an $N$-dimensional real number valued array. In order to transform the tensor form dynamics regressor to the matrix form, the bar operator is defined such that
\begin{equation}
	\begin{split}
		\bar{\underline{R}} = [&R_{11};R_{12}+R_{21};R_{13}+R_{31};R_{22};\\
		&R_{23}+R_{32};R_{33};R_{14}+R_{41};\\
		&R_{24}+R_{42};R_{34}+R_{43};R_{44}]
	\end{split}
\end{equation}

\begin{algorithm}
	\caption{The algorithm DR()}
	\label{alg: DR}
	\begin{algorithmic}[1]
		\REQUIRE The configuration $\boldsymbol{M}$, velocity $\boldsymbol{V}$, acceleration $\dot{\boldsymbol{V}}$, and the Jaccobian $\underline{\boldsymbol{J}}_{V}$ w.r.t. $\underline{q}_a$ of each body, the gravity $\boldsymbol{G}$, the DOF $n_a$, the number of rigid bodies $n$.
		\STATE $\underline{\boldsymbol{E}} = [\underline{x},\underline{y},\underline{z},\underline{O}]=\underline{I}_4$
		\STATE $\underline{R} = \underline{0}_{n_a\times n \times 4 \times 4}$
		\STATE $\underline{Y} = \underline{0}_{n_a\times 10n}$
		\FOR{$i=1:n$}
		\FOR{$m=1:4$}
		\STATE $\underline{\boldsymbol{E}}^i(m) = \widetilde{\boldsymbol{M}}(i) \underline{\boldsymbol{E}}(m) \boldsymbol{M}$
		\STATE $\underline{\dot{\boldsymbol{E}}}^i(m) = \underline{\boldsymbol{E}}^i(m) \times \boldsymbol{V}(i)$
		\STATE $\underline{\ddot{\boldsymbol{E}}}^i(m) = \underline{\boldsymbol{E}}^i(m) \times (\dot{\boldsymbol{V}}(i) + \boldsymbol{G}) + \underline{\dot{\boldsymbol{E}}}^i(m) \times \boldsymbol{V}(i)$
		\ENDFOR
		\FOR{$m=1:4$}
		\FOR{$N=1:4$}
		\STATE $\underline{R}(:,i,m,n) = \underline{\boldsymbol{J}}_{V}(i) .\wedge \left(\underline{\boldsymbol{E}}^i(m) \vee \underline{\ddot{\boldsymbol{E}}}^i(n)\right)$
		\ENDFOR
		\ENDFOR
		\STATE $\underline{Y}(:,10i-9:10i) = \bar{\underline{R}}(:,i,:,:)$
		\ENDFOR		
		\RETURN $\underline{Y}$
	\end{algorithmic}
\end{algorithm}

\section{Demonstrations}

\begin{table*}[h]
	\centering
	\begin{tabular}{cccc}
		\toprule
		Robot  & Numerical (\cite{Gautier1991NumericalCal}) & RPNA (\cite{Wensing2024Geometric}) & DRNG (proposed) \\ \noalign{\smallskip}
		\midrule
		Puma560  & 24 (74.33 ms) & 24 (54.47 ms) & 24 (0.67 ms)\\ \noalign{\smallskip}
		Unitree Go2  & 36 (144.10 ms) & 36 (59.96 ms)& 36 (1.52 ms) \\ \noalign{\smallskip}
		PKM: 2RRU-1RRS  & 43 (46.36 s, {\color{red} wrong results})& --- & 45 (1.81 ms)\\ \noalign{\smallskip}
		PKM: 2PRS-1PSR  & 47 (41.79 s) & --- & 47 (2.26 ms) \\ \noalign{\smallskip}
		\bottomrule
	\end{tabular}
	\caption{Dimensions of the four robots' dynamics regressor nullspace and run-time of the each algorithm.}
	\label{tab: results}
\end{table*}
\label{sec: demonstrations}
In this section, four examples are provided to demonstrate the proposed DRNG algorithm, and the results are listed in Table. \ref{tab: results}. Based on these examples, it is concluded that the DRNG algorithm is effective and efficient no matter the robot is fixed-base or floating-base, and no matter with or without closed loops. 

The algorithms in this paper are implemented in MATLAB R2024a on a laptop (i5-12500H @ 2.50 GHz). We apply the numerical method (\cite{Gautier1991NumericalCal}) to validate the results of DRNG algorithm. With random samples of $\{\underline{q}_a, \dot{\underline{q}}_a, \ddot{\underline{q}}_a\}$ and kinematics algorithms,  $\boldsymbol{M},\boldsymbol{V},\dot{\boldsymbol{V}}$ of each rigid body in robot can be calculated. Then, with the Algorithm \ref{alg: DR}, the dynamics regressors of each sample can be calculated and concatenated as a matrix $\underline{\mathbb{Y}}$. The output of DRNG is transformed to a matrix $\underline{B}_{null}$, and its complementary space $\underline{B}=\underline{B}_{null}^\perp$ is calculated. Finally, the if the following criteria is satisfied, the output DRNG includes all the basis vectors of $Null(Y)$.
\begin{subequations}
	\label{eq: criteria}
	\begin{align}
		\operatorname{rank(\underline{\mathbb{Y}}\underline{B})} &= \operatorname{rank(\underline{B})}\\
		\|\underline{\mathbb{Y}}\underline{B}_{null}\|_{F}&=0
	\end{align}
\end{subequations} 
where $\|\cdot\|_{F}$ is the Frobenius norm of a matrix.
\subsection{Puma560}
\begin{figure}[t]
	\centering
	\includegraphics[width=0.9\linewidth]{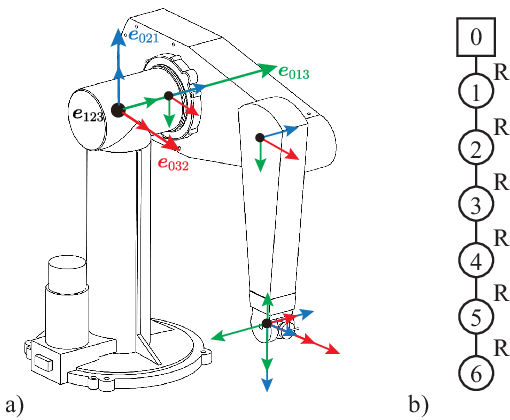}
	\caption{Illustration of the initial configuration and joint frames of Puma560 in a) and the graphic topology in b).}
	\label{fig: Puma560}
\end{figure}
Puma560 is one of the most classical serial robots, and we chose it as a representative for serial robots. It is composed of six rigid bodies connected by R joints, resulting in 6 DOFs. Its initial configuration is illustrated in Fig. \ref{fig: Puma560}. 

We apply the three principles to analyze the base parameters of Puma560's inertia. First, the shared points of the six R joints result in $3\times6=18$ nullspace basis vectors. Second, because the first link is connected to the fixed base through an R joint, it has fixed points and its motion is planar rotations. According to the principle 2, the fixed points of link $1$ result in extra $4$ nullspace basis vectors if the gravity is along z-axis, and $2$ nullspace if the gravity is not along z-axis. According to the principle 3, the planar rotations of link $1$ result in $2$ nullspace basis vectors. Therefore, only $36$ inertial parameters wit z-direction gravity can be identified independently, and $38$ inertial parameters otherwise. This result is consistent with that in \cite{KhalilModeling2002} and the example in \cite{Wensing2024Geometric} when the rotor parameters are not considered.

Then, the Puma560 is represented as a struct variable introduced in Table \ref{tab: robot representation}. The DRNG algorithm takes it as input and returns $\underline{B}_{null}$ and $\underline{B}$.  In numerical validation, 100 samples of $\underline{q}_a, \dot{\underline{q}}_a, \ddot{\underline{q}}_a$ are uniformly generated on the interval $(0,1)$. The concatenated matrix $\underline{\mathbb{Y}}$ has 600 rows and 60 columns. The QR decomposition method is applied to $\underline{\mathbb{Y}}$ for base parameters analysis. The output of DRNG algorithm satisfies the criteria \eqref{eq: criteria}, and it is the same as that of QR decomposition method in both cases when the gravity is along z-axis and along x-axis. 

The run-time of the algorithms is also evaluated on MATLAB 2024a. The numerical method costs $74.33$ms, and the RPNA algorithm proposed in \cite{Wensing2024Geometric} costs $54.47$ms. The proposed DRNG algorithm costs $0.67$ms, which is about $70\times$ faster than the advanced PRNA algorithm. It is mainly because the proposed algorithms only involves simple analytical calculations, without calling the QR decomposition function.

\subsection{Unitree Go2}
\begin{figure}[t]
	\centering
	\includegraphics[width=0.9\linewidth]{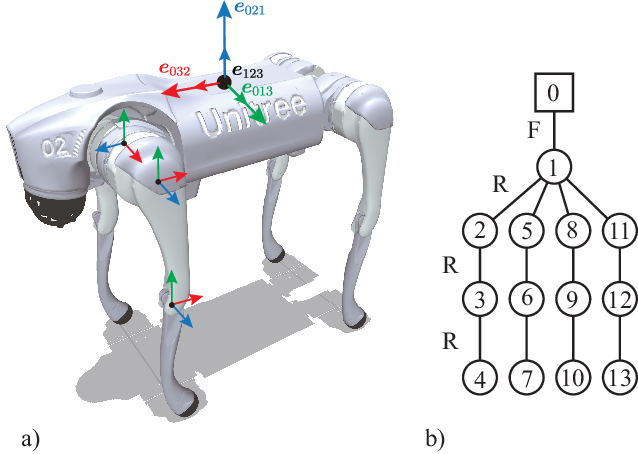}
	\caption{a). Illustration of the initial configuration of Unitree Go2. The joint frames of one leg and the floating joint frame are illustrated. b). The graphic topology.}
	\label{fig: Unitree}
\end{figure}
Unitree Go2 is a widely applied quadruped. It is composed of a floating base and four legs. Each leg can be modeled as three links connected by three R joints. This robot is used as a representative for robots with floating base. Its initial configuration, and tetrahedral points on one leg is illustrate in Fig. \ref{fig: Unitree}.

Because all the rigid bodies in floating-base system undergoes spatial motion, the fixed points and planar rotations do not exist in this kind of robot. Only principle 1 is active for base parameter analysis of inertial parameters. As for the Unitree Go2, it has $13$ rigid bodies and $12$ R joints. Each R joint has two shared points and results in $3$ nullspace basis vectors. Therefore, the nullspace of dynamics regressor has $36$ basis vectors, and $130-36=94$ inertial parameters can be independently identified. 

It should be noted that in the case of floating-base robot, the configuration of floating base is depicted by the Lie group of rigid body motion. In our algorithm, it is the motor group $\boldsymbol{\mathcal{M}}_{3,0,1}$. With the help of the exponential map, we can use elements in Lie algebra $\boldsymbol{m}_{3,0,1}$ to generate elements on $\boldsymbol{\mathcal{M}}_{3,0,1}$. Suppose $\underline{q}_{[1:6]}$ is the configuration coordinates of the floating base, and $\underline{\tau}_{[1:6]}$ is the wrench acting on the floating base. Then, the configuration of the floating base is calculated according to
\begin{equation}
	\begin{split}
		\boldsymbol{M} =& \operatorname{exp}\left(q_{1}\boldsymbol{e}_{23} + q_{2}\boldsymbol{e}_{31} + q_{3}\boldsymbol{e}_{12} \right.\\ &\left.+ q_{4}\boldsymbol{e}_{01} + q_{5}\boldsymbol{e}_{02} + q_{6}\boldsymbol{e}_{03}\right)
	\end{split}
\end{equation}
Further more, the velocity and acceleration coordinates of the floating base are chosen such that
\begin{align}
	\begin{split}
		\boldsymbol{V} = &\dot{q}_{1}\boldsymbol{e}_{23} + \dot{q}_{2}\boldsymbol{e}_{31} + \dot{q}_{3}\boldsymbol{e}_{12} \\
		&+ \dot{q}_{4}\boldsymbol{e}_{01} + \dot{q}_{5}\boldsymbol{e}_{02} + \dot{q}_{6}\boldsymbol{e}_{03}
	\end{split}\\
	\begin{split}
		\dot{\boldsymbol{V}} = &\ddot{q}_{1}\boldsymbol{e}_{23} + \ddot{q}_{2}\boldsymbol{e}_{31} + \ddot{q}_{3}\boldsymbol{e}_{12} \\
		&+ \ddot{q}_{4}\boldsymbol{e}_{01} + \ddot{q}_{5}\boldsymbol{e}_{02} + \ddot{q}_{6}\boldsymbol{e}_{03}
	\end{split}
\end{align}
Based on the above coordinates, 100 samples of $\underline{q}_a, \dot{\underline{q}}_a, \ddot{\underline{q}}_a$ are uniformly generated on the interval $(0,1)$. The resultant numerical regressor $\underline{\mathbb{Y}}$ is a $1800 \times 130$ matrix. Then, the QR decomposition method is applied to validate the output of DRNG. The numerical results show that the rank of $\underline{\mathbb{Y}}$ is 94, i.e. 36 inertial parameters can not be identified independently. It is consistent with the output of our DRNG algorithm. Besides, the criteria \eqref{eq: criteria} is also satisfied, validating the correctness of the DRNG algorithm. 

As for the efficiency, the numerical method costs $144.10$ms, but the proposed algorithm costs only $1.52$ms. The case of floating-base in \cite{Wensing2024Geometric} is the Cheetah 3 quadruped, but only one leg is considered. The Unitree Go2 is re-modeled so that the RPNA can be applied. The results show that the dimension of the nullspace when the motors are omitted and the floating-base is considered is 36, which is the same as the output of the proposed DRNG algorithm. However, it takes $59.96$ ms for RPNA, which is much slower than the proposed algorithm.

\begin{remark}
	A famous result in the dynamics identification of floating-base robots is that with proper excitation and measurement of wrenches acting on floating base, all the base parameters can be identified (\cite{Ayusawa2014LeggedMSD}). We can obtain the same conclusion directly from \eqref{eq: basic coeff}. Because the three principles are independent to actuation, the problem is whether the limited measurement of actuation causes new nullspace basis vector. 
	It is noted that every rigid body in the system is involved with the configuration of the floating base. It means that $\boldsymbol{L}_{[1:6]}^i$ in \eqref{eq: basic coeff} can not remain zero under proper excitation for all $i$. Besides, no extra geometric constraints make tetrahedral points generate lines crossing $\boldsymbol{L}_{[1:6]}^i$. Therefore, no extra nullspace basis vector exists, and all base parameters can be identified from the measurement of wrenches acting on the floating-base.
\end{remark}
\subsection{PKM: 2RRU-1RRS}
\begin{figure}[t]
	\centering
	\includegraphics[width=0.9\linewidth]{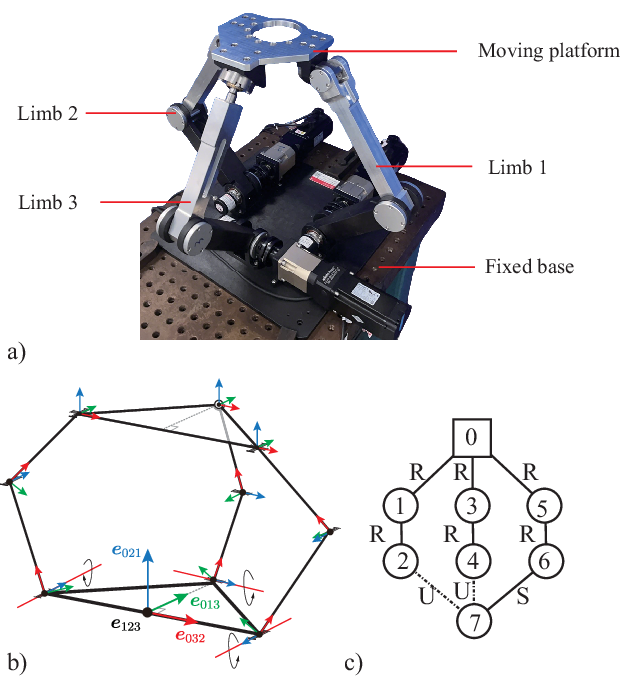}
	\caption{a). The picture of 2RRU-1RRS PKM. b) The illustration of the initial configuration. c). The graphic topology of the PKM.}
	\label{fig: 2RRU-1RRS}
\end{figure}
A main contribution of this paper is to enable the base parameter analysis of robot with multi-DOF joints in closed loops. In order to demonstrate this contribution, we apply the DRNG algorithm on a parallel kinematics mechanism (PKM) of type 2RRU-1RRS (\cite{Xu2023JMR}). Its moving platform is connected to the fixed base through three chains. Two of them is composed of R, R and U joints from base to platform and one of them is composed of R, R, and S joints. The rotational constraints in the U joints have no effect on the motion of the mechanism. Therefore, the PKM has 3 DOFs. Because the axis of the two R joints in each chain are parallel, the second rigid body of each chain is constrained such that only planar rotations are admitted. Its initial configuration and joint frames are illustrated in Fig. \ref{fig: 2RRU-1RRS}.

According to the shared points principle, there are 6 R joints, 2 U joints, and 1 S joints in the robot. They cause $3 \times 6 + 1 \times 3 = 21$ basis vectors of $Null(Y)$. Then, three links are constrained to the base through R joints, which results in $2\times 3 = 6$ extra basis vectors. Finally, six links are constrained to planar rotations, causing $2 \times 3 + 4 \times 3 = 18$ extra basis vectors based on the planar rotations principle. Therefore, there are $21 + 6 + 18 = 45$ basis vectors of $Null(Y)$, only $25$ of the $70$ inertial parameters can be identified independently. With the DRNG algorithm, the coordinates of the 45 basis vectors are found.

In the numerical validation, $5000$ samples of $\underline{q}_a$ are uniformly generated within the 3-dimensional region defined by the interval $(\frac{\pi}{9}, \frac{7\pi}{18}) \times (\frac{11 \pi}{18}, \frac{8\pi}{9}) \times (\frac{\pi}{9}, \frac{7\pi}{18})$, and $5000$ samples of $\dot{\underline{q}}_a, \ddot{\underline{q}}_a$ are uniformly generated within the interval $(0,1)$. Because of the kinematics constraints caused by closed loops, some samples $\underline{q}_a$ may cause the geometric constraints $C(\underline{q})=\underline{0}$ unsolvable, i.e. out of workspace. Besides, numerical issues occur in the kinematics calculations if the condition number of $\underline{J}_p$ in \eqref{eq: jacobian line} is too large. Therefore, only the samples $\underline{q}_a$ that lie in the workspace with the condition number of $\underline{J}_p$ is less than $30$ are selected as valid samples to calculate the numerical regressor $\underline{\mathbb{Y}}$. In the numerical experiments, only 831 of the 5000 samples are valid. The size of $\underline{\mathbb{Y}}$ is $2493 \times 70$. According to the QR decomposition method, its rank is $27$, which is not consistent with the results of our method. With a detailed observation of the diagonal elements of the R matrix, it is found that the $26$-th and $27$-th elements are both numerically small, approximately $1\times 10^{-6}$. We conclude that in the case of parallel robots, the QR decomposition method failed. For the output of our DRNG algorithm, it is found that $\|\underline{\mathbb{Y}}\underline{B}_{null}\|_{F}=4.04\times 10^{-8}$, satisfying the criteria \eqref{eq: criteria} in the numerical sense.

Besides, it takes $46.36$s to implement the numerical method, because all the samples need to be checked and the dynamics regressor is time-consuming. On the other hand, it takes only $1.81$ms to implement DRNG algorithm, which is much more faster. Although the numerical method can be accelerated by using less samples, it may cause numerical issues and wrong results.
\subsection{PKM: 2PRS-1PSR}
\begin{figure}[t]
	\centering
	\includegraphics[width=0.9\linewidth]{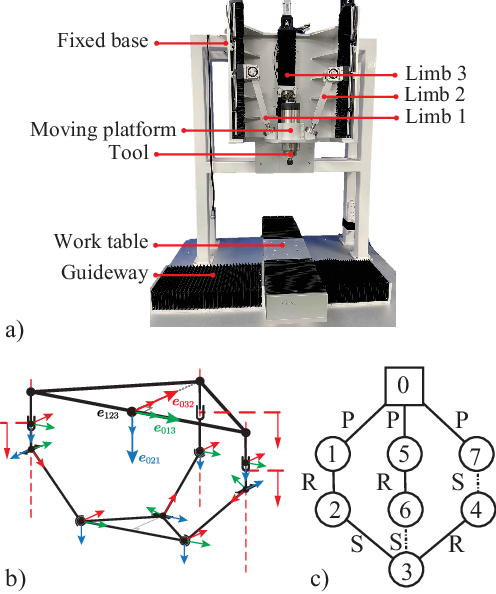}
	\caption{a). The picture of 2PRS-1PSR PKM. b) The illustration of the initial configuration. c). The graphic topology of the PKM.}
	\label{fig: 2PRS-1PSR}
\end{figure}
We further demonstrate the DRNG algorithm on a 2PRS-1PSR PKM (\cite{Shi2023Local}). It is also a 3-DOF PKM, but involved P joints. Besides, the joint couple ``PR" in the robot provides another case when planar rotations principle take effect. Similarly, this PKM has three chains, connecting the moving platform to the fixed base, resulting in a 3-DOF mechanism. Actuators are installed in the three P joints, and the corresponding translation coordinates are chosen as $\underline{q}_a$. Its initial configuration and the graphic topology are illustrated in Fig. \ref{fig: 2PRS-1PSR}.

Again, we first apply the three principles to analyze its dynamics regressor nullspace. With the principle 1, $6\times 3 + 3 \times 3 + 1 \times 3 = 30$ nullspace basis vectors are generated because of the shared points of 3 P joints, 3 R joints, and 3 S joints. With principle 2, $3 \times 3 = 9$ extra nullspace basis vectors are generated. As about principle 3, it is found that the rigid bodies are constrained to follow planar rotations because of the ``PR" joints. As a result, $4 \times 2=8$ extra basis vectors exist. Therefore, the dynamics regressor nullspace has $30+9+8=47$ independent basis vectors, which makes only $70-47=23$ inertial parameters can be identified independently.

The numerical validation procedure is the same as that of the 2RRU-1RRS PKM. $5000$ samples of $\underline{q}_a$ are first uniformly generated in the region $(-0.15,0.09) \times (-1.33,0.10) \times (-0.10,0.13)$, and only the ones in workspace with the condition number of $\underline{J}_p$ less than $30$ are chosen as valid samples. Correspondingly, $\dot{\underline{q}}_a, \ddot{\underline{q}}_a$ are uniformly generated within the interval $(0,1)$. The resultant numerical regressor $\underline{\mathbb{Y}}$ is with the size $8793\times 70$. The rank of $\underline{\mathbb{Y}}$ is $23$, which is identical to the analysis based on the proposed principles. Furthermore, the DRNG algorithm is applied to this PKM, and the output satisfies the criteria in \eqref{eq: criteria}. For the same reason as the 2RRU-1RRS PKM, the DRNG algorithm costs only $2.26$ms, while the numerical method costs $41.79$s.

\section{Conclusion and future work}
\label{sec: conclusion}
In this paper, the robot dynamics is reformulated with PGA, and a novel geometric method is proposed to determine base parameters in robot inertia identification. With four points that can form a tetrahedron, the coefficients in the dynamics regressor matrix are formulated as the join of these points' positions and their accelerations. The identification model of a robot is then constructed base on this observation. Furthermore, three principles, including shared points principle, fixed points principle and planar rotations principle, are proposed to determine the base parameters analytically. The dynamics regressor nullspace generator (DRNG) algorithm is then developed to implement the three principles. It takes the geometry model of a robot (fixed-base or floating-base, with or without closed-loops), and returns the basis vectors of dynamics regressor nullspace. Its complexity is $O(1)$ in theory. Through comprehensive demonstrations, the correctness of the algorithm is validated, and the method is proved to be general, robust, and efficient.

In future work, the method will be further applied to accelerate the dynamics computations, and it will be generalized to the under-actuated systems.

\begin{acks}
	This work was supported by the National Natural Science Foundation of China (Grant No.  52275501, 51935010) and the State Key Laboratory of Mechanical System and Vibration (Grant No. MSVZD202503).
\end{acks}

\bibliographystyle{SageH}

\end{document}